\documentclass{article}
\PassOptionsToPackage{numbers, sort}{nonatbib}
\usepackage[preprint]{neurips_2023}
\usepackage{graphicx} % Required for inserting images
\usepackage[utf8]{inputenc} % allow utf-8 input
\usepackage[T1]{fontenc}    % use 8-bit T1 fonts
\usepackage{hyperref}       % hyperlinks
\usepackage{url}            % simple URL typesetting
\usepackage{booktabs}       % professional-quality tables
\usepackage{amsfonts}       % blackboard math symbols
\usepackage{nicefrac}       % compact symbols for 1/2, etc.
\usepackage{microtype}      % microtypography
\usepackage{xcolor}         % colors
\usepackage{subfigure}
\usepackage{graphicx}
\usepackage{amsmath}
\usepackage{amsfonts} 
\usepackage{mathrsfs}
\usepackage{enumerate}
\usepackage{amsthm}
\usepackage{wrapfig}
\usepackage{colortbl}
\usepackage{caption}

\newtheorem{definition}{Definition}
\newtheorem{proposition}{Proposition}
\newtheorem{theorem}{Theorem}

\title{Prior Bilinear Based Models for Knowledge Graph Completion}

\author{%
  Jiayi Li$^{12}$\thanks{Equal contribution.
  $\dagger$ Corresponding author: yang.yujiu@sz.tsinghua.edu.cn.} , 
  % \texttt{hippo@cs.cranberry-lemon.edu} \\
  % examples of more authors
  % \And
  Ruilin Luo$^{1\ast}$,
  % Tsinghua Shenzhen International Graduate School\\
  % Tsinghua University\\
  % \texttt{hippo@cs.cranberry-lemon.edu} \\
  \textbf{Jiaqi Sun$^{3}$},
  \textbf{Jing Xiao$^{4}$},
  \textbf{Yujiu Yang$^{1\dagger}$} \\
  \\$^{1}$Tsinghua Shenzhen International Graduate School, Tsinghua University
  \\$^{2}$Baidu Inc.
  \\$^{3}$Carnegie Mellon University
  \\$^{4}$Pingan Group
  \\ \texttt{\{lijy20, lrl23\}@mails.tsinghua.edu.cn}
}

\begin{document}

\maketitle
\begin{abstract}
Bilinear based models are powerful and widely used approaches for Knowledge Graphs Completion (KGC).
Although bilinear based models have achieved significant advances, these studies mainly concentrate on posterior properties (based on evidence, e.g. symmetry pattern) while neglecting the prior properties. In this paper, we find a prior property named "the law of identity" that cannot be captured by bilinear based models, which hinders them from comprehensively modeling the characteristics of KGs. To address this issue, we introduce a solution called \underline{Uni}t Ball \underline{Bi}linear Model (UniBi). This model not only achieves theoretical superiority but also offers enhanced interpretability and performance by minimizing ineffective learning through minimal constraints. Experiments demonstrate that UniBi models the prior property and verify its interpretability and performance.

\end{abstract}

\section{Introduction}
\label{sec:intro}
Knowledge Graphs (KGs) store human knowledge in the form of triple $(h, r, t)$, which represents a relation $r$ between a head entity $h$ and a tail entity $t$~\citep{survey1}. KGs benefit a lots of downstream tasks and applications, e.g., recommender system~\citep{recommend}, dialogue system~\citep{dialogue} and question answering~\citep{QA}. Since actual KGs are usually incomplete, researchers are interested in predicting missing links to complete them, termed as Knowledge Graph Completion (KGC). As a common solution, Knowledge Graph Embedding (KGE) completes KGs by learning low-dimensional representations of entities and relations.

As one typical category of KGE, bilinear based models have achieved great advances~\citep{complex, cp, analogy,rescal}, 
these work only focus on posterior properties, which are based on evidence of triples, such as relational patterns~\cite{dihedral, analogy} and complex relations~\cite{quatde, tang-etal-2020-orthogonal}. For example, we treat a relation as symmetric based on the observation that $(h, r, t)$ and $(t, r, h)$ are co-occurring.
% The prior properties that hold ubiquitously are ignored, which hinders a fully understanding and modeling of KGs' properties.
Here, \textbf{we ask does a prior property exist?} 

Our answer is \textbf{the law of identity} in Logic~\cite{logic}, which means that everything is only identical to itself. In KGs, this rule implies that not only the representations of entities are different but also the representation of \textit{identity} should be unique so that it can be decided without any facts, or \textit{a priori}. 
% corresponds to the uniqueness of entities and \textit{identity} relation in KGs.
However, we find that the uniqueness of \textit{identity} have not been captured by previous bilinear based model, which prevents them to fully capture the properties of KGs.

% 这段要重写，故事要回到唯一性上，然后可以can we ... like the eye of the storm?
% 我们在问在这里面可不可以什么是恒定的？

% 之前的模型已经取得了很好的成果

% In contrast to other relations, the ideal representation for \textit{identity} is supposed to be unique and independent of specific KGs. 

% Mathematically speaking, bilinear based models have score functions like $\mathbf{h}^\top \mathbf{R} \mathbf{t}$, where $\mathbf{h, R, t}$ are the representations for $h$, $r$, and $t$. We believe that a bilinear based model uniquely models \textit{identity} means it has only one matrix $\mathbf{R}$ that $\forall \mathbf{h} \neq \mathbf{t},\ \mathbf{h}^\top \mathbf{R} \mathbf{h}> \mathbf{h}^\top \mathbf{R} \mathbf{t}$. 

To present the problem more clearly, we first need to introduce some notation. A model with a score function $s(h,r,t)$ can model the uniqueness of \textit{identity} means that $\forall h \neq t, s(h, r, h) > s(h, r, t)$ holds if and only if $r$ is \textit{identity} and its universal representation is unique. In addition, the score function $s(\cdot)$ of bilinear based model is $\mathbf{h}^\top \mathbf{R} \mathbf{t}$, where $\mathbf{h, R, t}$ are the representations of $h$, $r$, and $t$.

In terms of such uniqueness, bilinear based models have two flaws. On the one hand, Fig. \ref{fig:case 1} demonstrates $\mathbf{e}_1^\top \mathbf{I}\mathbf{e}_1 < \mathbf{e}_1^\top \mathbf{I}\mathbf{e}_2$, which means that the relation matrices per se do not model $identity$ perfectly. On the other hand, Fig. \ref{fig:case 2} shows even if a matrix, e.g. $\mathbf{I}$, does. Its scaled one $k\mathbf{I}$ can also model \textit{identity} and thus breaks the uniqueness.

Obviously, modeling this property requires both entities and relations to be restricted, which reduces expressiveness. Yet, we make this cost negligible by minimizing the constraints, one per entity or relation, while modeling the desired property. To be specific, we normalize the vectors of the entities and the spectral radius of the matrices of the relations to $1$. Since the model captures entities in a unit ball as shown in Fig. \ref{fig:unit ball}, we name it \underline{Uni}t Ball \underline{Bi}linear Model (UniBi)

% To this end, bilinear based models are adapted to a modification proven to satisfy such property. The proposed model is achieved by normalizing the vectors of the entities and restricting the spectral radius to $1$ for the matrices of the relations. Since the model captures entities in a unit ball as shown in Fig. \ref{fig:unit ball}, we name it \underline{Uni}t Ball \underline{Bi}linear Model (UniBi).

% This peculiar characteristic implies some counterintuitive properties that are plausible.

% Beyond modeling \textit{identity} uniquely, UniBi is more robust and independent of additional regularization terms, 
In addition to the theoretical superiority, UniBi is more powerful and interpretable since modeling identity uniquely requires normalizing the scales that barely contain any useful knowledge.
% better modeling the irreversible increase in ambiguity. 
% On the one hand, it inherently subsumes DURA, the best regularization term for bilinear based models~\citep{dura}, in the perspective of Lagrangian functions.
On the one hand, scale normalization prevents ineffective learning on scales and makes UniBi focus more on useful knowledge. On the other hand, it shows the relationship between the complexity of relations and the singular values in the matrices used to represent them.
% On the other hand, UniBi interprets the norms of entities as their certainty and captures the character that the certainty will not increase under any relation.
% On the other hand, }
% On the other hand, UniBi interprets the increment of certainty  

% First, we find a set of necessary conditions of UniBi and propose a constrained optimization problem. Then, we transform this problem to a Lagrangian function and discover a strong connection between such function and DURA, the best regularization term before~\citep{dura}. 

Experiments verify that UniBi models the law of identity with improvement on performance and interpretability. Therefore, UniBi is a prior KGE model and a potential paradigm for bilinear based models.

% Besides, we also find that 

% 通过拉格朗日函数进行分析，我们发现理论上我们的最优解不会低于现在的sota的最优解(这个不好，因为只是loss的角度而已，这不代表实际的效果）

%     \item We 
% \end{enumerate}
% On the one hand, we show both of these models in vector space are actually the combination of a linear transformation and a similarity measure (dot product or Euclidean distance). We notice that cosine similarity is widely used in Computer Vision~\citep{normface,cosface}, which unifies dot product and Euclidean distance by normalization. Similarly, we normalize the embeddings of all entities and remove the redundant terms to get a unified model.

\begin{figure}[t]
    \centering
    \subfigure[]{
        \includegraphics[width=3cm]{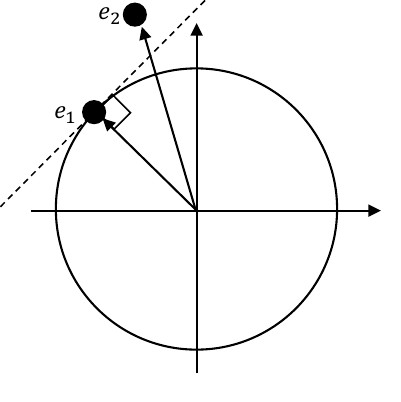}
        \label{fig:case 1}
    }
    \quad\quad
    \subfigure[]{
        \includegraphics[width=3cm]{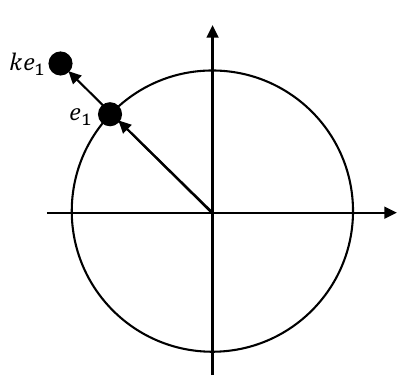}
        \label{fig:case 2}
    }
    \quad\quad
    \subfigure[]{
        \includegraphics[width=3cm]{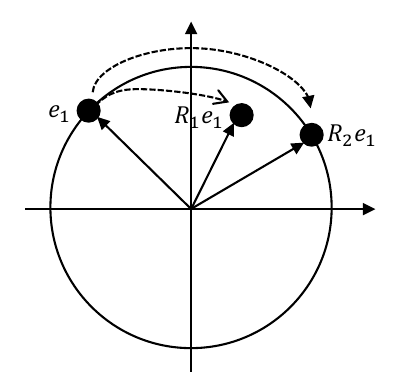}
        \label{fig:unit ball}
    }
    \caption{The flaws of bilinear based models and our solution in terms of modeling the uniqueness of \textit{identity}. (a) Identity matrix fails to model \textit{identity}. (b) Scaled identity matrix could also model \textit{identity}. (c) An illustration of UniBi. All entities are embedded in the unit sphere and stay in the unit ball after relation specific transformations.}
    \label{fig:my_label}
\end{figure}
\section{Preliminaries}
\label{sec:prelim}
% In this section, we introduce background of the problem and notations related to this paper. 
\subsection{Background}
A Knowledge Graph $\mathcal{K}$ is a set that contains the facts about sets of entities $\mathcal{E}$ and relations $\mathcal{R}$. Each fact is stored by a triple $(e_i, r_j, e_k) \in \mathcal{E} \times \mathcal{R} \times \mathcal{E}$ where $e_i$ and $r_j$ denote the $i$ -th entity and the $j$ -th relation, respectively.

KGE learns embeddings for each entity and relation via a score function $s:\mathcal{E}\times\mathcal{R}\times\mathcal{E}\to \mathbb{R}$. To verify the performance of a KGE method, $\mathcal{K}$ is first divided into $\mathcal{K}_{train}$ and $\mathcal{K}_{test}$. Then, the method is trained on $\mathcal{K}_{train}$ to learn the embeddings $\mathbf{e}$ and $\mathbf{r}$ (or $\mathbf{R}$) for each entity $e\in\mathcal{E}$ and relation $r\in\mathcal{R}$. Finally, the model is expected to give a higher rank if $(e_i, h_j, e_k) \in \mathcal{K}$ while a lower rank if $(e_i, h_j, e_k) \notin \mathcal{K}$, for each query $(e_i, h_j, ?)$ from $\mathcal{K}_{test}$ and the candidate entity $e_k \in \mathcal{E}$.

In addition to the above tail prediction, models are also required to test on head prediction conversely.
We transform it to tail prediction by introducing reciprocal relations following \citet{n3}.
% previous study~\citep{n3} has shown that this task can be transformed to tail prediction by introducing reciprocal relations. 
% Moreover, we discuss why this is useful in Section \ref{subsubsec:monoid}, 

In logic, posterior and prior knowledge are distinguished by whether it holds without evidence.
% For example, on the one hand, if we know $a = 1$, then $a+1 = 2$, on the other hand, if we say that a person is a man, then we need more information to confirm.
In KGs, we say relational patterns and complex relations are posterior because we cannot assign them to any of the relations without a triplet or other information. For example, we need to observe that $(h, r, t)$ and $(t, r, h)$ are co-occurring to decide a relation $r$ has the symmetry pattern or observe $(h, r, t_1)$ and $(h,r,t_2)$ to conclude $r$ is a 1-N relation.

We claim that the law of identity is a prior property that is true for all entities in any KGs. To be specific, the law of identity is one of the basic prior rule in Logic~\cite{logic}, it means that everything is identical to itself, or $\forall x, x=x$. And this law corresponds to the definition of entity\footnote{Cambridge Dictionary defines it as "Something that exists apart from other things, having its own independent existence".}.

% \begin{definition}
% \label{def: identity uniquely}
% A KGE model can uniquely model \textit{identity} means $s(h, r, h) > s(h, r, t)$, $\forall h, t \in \mathcal{E}, h \neq t$ holds if and only if $r$ is \textit{identity} and its universal representation, i.e., it works in any KG, is unique. 
% \end{definition}

\subsection{Other Notations}
We utilize $\hat{\mathbb{E}}$ and $\hat{\mathbb{R}}$ to denote the set of all possible representations of entities and relations. And we use $\mathbf{e} \in \hat{\mathbb{E}}$ and $\mathbf{R}\in \hat{\mathbb{R}}$ to denote the embedding vector of the entity $e$ and the transformation matrix specific to the relation $\mathbf{R}$. Furthermore, we use $\|\cdot\|$ to denote the L2 norm of the vectors, $\|\cdot\|_F$ and $\rho(\cdot)$ to represent the Frobenius norm and the spectral radius of a matrix. 
% We use italicized text to indicate relations, for example \textit{identity}. 

% We use $\mathbf{e} \in \hat{\mathbb{E}}$ and $\mathbf{R}\in \hat{\mathbb{R}}$ to denote the the entity embedding vector and relation specific transformation matrix, where $\hat{\mathbb{E}}$ and $\hat{\mathbb{R}}$ denotes the set of all possible representations of entities and relations respectively. 

In this paper, we focus on $n$-dimensional real space $\mathbb{R}^n$, which means $\mathbb{\hat{E}} \subseteq \mathbb{R}^n$ and $\mathbb{\hat{R}} \subseteq \mathbb{R}^{n\times n}$. 
We also consider real vector spaces whose vectors are complex $\mathbb{C}^n$ or hypercomplex space $\mathbb{H}^n$, since they are isomorphic to $\mathbb{R}^{2n}$ or $\mathbb{R}^{4n}$.
% Furthermore, for other real vector spaces whose vectors are complex $\mathbb{C}^n$ or hypercomplex space $\mathbb{H}^n$, they are also being considered since they can be transformed to their isomorphism on $\mathbb{R}^{2n}$ or $\mathbb{R}^{4n}$ .

\section{Related Work}
\label{sec:related work}

% This work is mainly related to the methods based on vector space. To be specific, it can be divided into two categories, i.e. Bilinear and Distanced Based Models. For a triple $(h,r,t)$, They both apply a relation specific linear mapping to head entity while diverge in how to measure the similarity between the transformed head entity and tail entity, which is detailed in \ref{sec:method}.
Previous work mainly handle two kind of posterior properties, namely relational patterns and complex relations. On the one hand, relational patterns are the intrinsic properties of relations, and is formally introduced by ComplEx~\cite{complex}. Base on this, RotatE~\cite{sun2018rotate} proposes composition pattern, Analogy~\cite{analogy} introduces analogy pattern, or commutative pattern, and Dihedral~\cite{dihedral} adds non-commutative pattern. On the other hand, complex relations are the extrinsic properties of relation, and is introduced in TransH~\cite{transh} to denote the relations that are not 1-1, or 1-N, N-1, N-N.

Beyond posterior properties, previous work of KGE can be roughly divided into the following three categories: distance, bilinear and others. 

Distance based models choose Euclidean distance for their score functions. TransE~\citep{bordes2013translating} inspired by Word2Vec~\citep{Mikolov2013DistributedRO} in Natural Language Processing proposes the first distance based model, which uses translation as the linear transformation $s(h,r,t) = -\|\mathbf{h} + \mathbf{r} - \mathbf{t}\|$. TransH~\citep{transr} and TransR~\citep{transr} find that TransE difficult to handle complex relations and thus apply linear projections before translation. Apart from translation, RotatE~\citep{sun2018rotate} first introduces rotation as the transformation. RotE~\citep{rote} further combines translation and rotation. Some works also introduce hyperbolic spaces~\citep{murp,rote,mix}. 

In contrast, bilinear based models have score functions in the bilinear form $s(h,r,t) = \mathbf{h^\top Rt}$. RESCAL~\citep{rescal} is the first bilinear based model whose relation matrices are unconstrained. Although RESCAL is expressive, it contains too many parameters and tends to overfitting. DistMult~\citep{distmult} simplifies these matrices into diagonal ones. ComplEx~\citep{complex} further introduces complex values to model the skew-symmetry pattern. Analogy~\citep{analogy} uses block-diagonal to model the analogical pattern and subsumes DistMult, ComplEx, and HolE~\citep{hole}. Moreover, QuatE~\citep{quate} extends complex values to quaternion and GeomE~\citep{geometric} utilizes geometric algebra to subsume all these models.

In addition, other works using black-box networks~\citep{conve,convkb,kgbert,rgcn,rghat} or additional information~\citep{text,type} are beyond the scope of this paper.

% \citep{dihedral} \citep{hole}

\section{Method}
In this section, we first discuss prior property, and give the condition for bilinear based models to model the law of identity in Section \ref{sec:math condition}. We then propose a model named UniBi that satisfies it with least constraint in Section \ref{sec:unibi} and an efficient modeling for UniBi in Section \ref{sec:efficient modeling}. In addition, we also discuss its improvement on performance and interpretability via scale normalization in Section \ref{sec:properties}.
\label{sec:method}
\subsection{Prior Property and Identity Relation}
\label{sec:math condition}

In Section \ref{sec:prelim}, we have shown that the law of identity means $\forall x, x=x$. We consider how entities and \textit{identity} relation, i.e. $x$ and $=$, are embedded, and the fact the embedding of \textit{identity} should be specified \textit{a priori}, we find that this is equivalent to the following definition:

\begin{definition}
\label{def:identity in KGE}
A KG model can model the law of identity, which means that the embeddings of entities are different and the embeddings of the identity relation is unique.  
\end{definition}

We notice the differences in the embedding of entities are met effortlessly, yet 
the uniqueness of identity cannot be modeled by bilinear based models. We demonstrate two cases in which bilinear based models violate it. 
On the one hand, Fig. \ref{fig:case 1} demonstrates $\mathbf{e}_1^\top \mathbf{I}\mathbf{e}_1 < \mathbf{e}_1^\top \mathbf{I}\mathbf{e}_2$, which means that the matrix of a relation per se is not guaranteed for modeling \textit{identity}. On the other hand, Fig. \ref{fig:case 2} shows even if a matrix, e.g., $\mathbf{I}$, does. Its scaled one $k\mathbf{I}$ where $k > 0,k\neq 1$ can also model \textit{identity}, which contradicts the quantification of uniqueness. Therefore, we give a formal definition based on definition \ref{def:identity in KGE} as following to investigate how to modify bilinear based models to model this uniqueness and the law of identity.

\begin{definition}
\label{def:identity}
A bilinear model can model the law of identity means: 
\begin{equation}
\label{equ:identity}
    \exists!\ \mathbf{R}\in\mathbb{\hat{R}}, \forall \mathbf{h,t}\in\mathbb{\hat{E}}, \mathbf{h}\neq\mathbf{t}, \  \mathbf{h^\top R h} > \mathbf{h^\top R t},
\end{equation}
where $\exists !$ is the uniqueness quantification. 
\end{definition}

% In this paper, we treat entities are embeded in a $n$-dimensional real space $\mathbb{R}^n$ and $\mathcal{\hat{R}} $

\subsection{Unit Ball Bilinear Model}
\label{sec:unibi}
% \subsubsection{UniBi}
% 给出两个例子
% 直观上我们应该对entity和relation各自加约束
% 也就是似乎二者都不应该超出这个球
% 那么我们给出模型
% proof, 细节放到后面
% 注意到，进而得名
% 然后剩下的放到下一节

% From the above examples, it is clear that both embeddings of entities and relations needed to be constrained.
From the above examples, it is easy to see that modeling the law of identity requires both entities and relations be restricted, which will reduce expressiveness. To solve this dilemma, we make the cost negligible by minimizing the constraints, one per entity or relation, while modeling the desired property.
% Intuitively, the former is supposed to be located in a sphere and the latter should contain $\mathbf{I}$ while excluding $k\mathbf{I}$ for $k>0, k\neq 1$. 

To be specific, we normalize the vectors of the entities and the spectral radius of the matrices of the relations to $1$ by setting $\mathbb{\hat{E}} = \{\mathbf{e}\,|\,\|\mathbf{e}\| = 1, \mathbf{e} \in \mathbb{R}^n\}$ and $\mathbb{\hat{R}} = \{\mathbf{R}\,|\,\rho(\mathbf{R}) = 1,\mathbf{R}\in\mathbb{R}^{n\times n}  \}$. We name the proposed model as \underline{Uni}t Ball \underline{Bi}linear Model (UniBi), since it captures entities in a unit ball as shown in Fig. \ref{fig:unit ball}. The score function of UniBi is\footnote{We present in more detail the similarities and differences between our constraints and those of our predecessors in the Appendix \ref{app: comparison to previous restrictions}}:
% Following the above intuition, we propose \underline{Uni}t Ball \underline{Bi}linear Model (UniBi) by adding constraints.
% UniBi sets $\mathbb{\hat{E}} = \{\mathbf{e}\,|\,\|\mathbf{e}\| = 1, \mathbf{e} \in \mathbb{R}^n\}$ and $\mathbb{\hat{R}} = \{\mathbf{R}\,|\,\rho(\mathbf{R}) = 1,\mathbf{R}\in\mathbb{R}^{n\times n}  \}$ and has a score function as\footnote{We present in more detail the similarities and differences between our constraints and those of our predecessors in the Appendix \ref{app: comparison to previous restrictions}.}:
\begin{equation}
\label{equ:unibi:constrain}
    s(h,r,t) = \mathbf{h^\top Rt},\ \|\mathbf{h}\|, \|\mathbf{t}\|=1,\rho(\mathbf{R}) = 1.
\end{equation}

% UniBi constrains the norm of embedings of entities and the spectral radius of matrices of relations to 1. As demonstrated in Fig.\ref{fig:unit ball}, entities are embedded in a unit sphere and will not escape from the unit ball after relation specific transformations.
We then have the following theorem.
\begin{theorem}
\label{theorem: can model}
UniBi is capable to model the law of identity in terms of definition \ref{def:identity}.
\end{theorem}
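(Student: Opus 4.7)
The plan is to split the theorem along the $\exists!$ in Definition~\ref{def:identity}: first exhibit one $\mathbf{R}\in\hat{\mathbb{R}}$ for which \eqref{equ:identity} holds, then show any such $\mathbf{R}$ must coincide with it.

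For existence, the natural candidate is $\mathbf{R}=\mathbf{I}$. Certainly $\rho(\mathbf{I})=1$ so $\mathbf{I}\in\hat{\mathbb{R}}$, and for any distinct unit vectors $\mathbf{h},\mathbf{t}\in\hat{\mathbb{E}}$ we have $\mathbf{h}^\top\mathbf{I}\mathbf{h}=\|\mathbf{h}\|^2=1$ whereas $\mathbf{h}^\top\mathbf{I}\mathbf{t}=\langle\mathbf{h},\mathbf{t}\rangle<1$ by the strict Cauchy–Schwarz inequality (equality on unit vectors only when $\mathbf{t}=\mathbf{h}$). This disposes of the existence half essentially by inspection.

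For uniqueness, I would fix an arbitrary candidate $\mathbf{R}\in\hat{\mathbb{R}}$ satisfying \eqref{equ:identity}, pick any unit $\mathbf{h}$, and examine the linear functional $\mathbf{t}\mapsto\mathbf{h}^\top\mathbf{R}\mathbf{t}=(\mathbf{R}^\top\mathbf{h})^\top\mathbf{t}$ restricted to the unit sphere. By Cauchy–Schwarz its maximum is $\|\mathbf{R}^\top\mathbf{h}\|$, uniquely attained at $\mathbf{R}^\top\mathbf{h}/\|\mathbf{R}^\top\mathbf{h}\|$ provided $\mathbf{R}^\top\mathbf{h}\neq\mathbf{0}$. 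Since the hypothesis forces $\mathbf{h}$ itself to be that strict maximizer, we must have $\mathbf{R}^\top\mathbf{h}=\lambda_{\mathbf{h}}\mathbf{h}$ with $\lambda_{\mathbf{h}}=\|\mathbf{R}^\top\mathbf{h}\|>0$; in other words, every unit vector is a positive eigenvector of $\mathbf{R}^\top$.

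The remaining step promotes this pointwise eigenvector condition to a global scalar identity. For any orthonormal pair $\mathbf{u},\mathbf{v}$ (take $n\geq 2$; the case $n=1$ is immediate), both $\mathbf{u},\mathbf{v}$ and the normalized sum $(\mathbf{u}+\mathbf{v})/\sqrt{2}$ are eigenvectors of $\mathbf{R}^\top$, and expanding $\mathbf{R}^\top(\mathbf{u}+\mathbf{v})$ two ways and using linear independence equates the eigenvalues. Hence $\lambda_{\mathbf{h}}$ is a common constant $c>0$, so $\mathbf{R}^\top=c\mathbf{I}$ and $\mathbf{R}=c\mathbf{I}$; the spectral-radius constraint $\rho(\mathbf{R})=1$ then pins $c=1$, giving $\mathbf{R}=\mathbf{I}$ and completing uniqueness. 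The main obstacle I anticipate is precisely this eigenvector-to-scalar promotion — it is the only place where the full structure of $\hat{\mathbb{R}}$ (as opposed to a single instance check) is used; the rest reduces to Cauchy–Schwarz and the prescribed normalizations.
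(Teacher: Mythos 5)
Your proof is correct, but it takes a genuinely different route from the paper's. Both arguments handle existence identically ($\mathbf{R}=\mathbf{I}$ plus the equality case of Cauchy--Schwarz on unit vectors). For uniqueness, the paper argues by contraposition: it writes an arbitrary $\mathbf{R}\neq\mathbf{I}$ with $\rho(\mathbf{R})=1$ in SVD form $\mathbf{U\Sigma V}^\top$ and splits into three cases (all singular values equal; unequal singular values with $\mathbf{U}=\mathbf{V}$; unequal with $\mathbf{U}\neq\mathbf{V}$), in each case exhibiting an explicit pair $\mathbf{h}\neq\mathbf{t}$ with $\mathbf{h}^\top\mathbf{R}\mathbf{h}\leq\mathbf{h}^\top\mathbf{R}\mathbf{t}$, the last case leaning on the boundedness result (Proposition \ref{prop:bound}). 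You instead argue directly: the hypothesis in Eq.~\ref{equ:identity} says each unit $\mathbf{h}$ is the strict maximizer of $\mathbf{t}\mapsto(\mathbf{R}^\top\mathbf{h})^\top\mathbf{t}$ on the sphere, Cauchy--Schwarz identifies that maximizer as $\mathbf{R}^\top\mathbf{h}/\|\mathbf{R}^\top\mathbf{h}\|$, so every unit vector is a positive eigenvector of $\mathbf{R}^\top$, and the standard promotion argument (expanding $\mathbf{R}^\top(\mathbf{u}+\mathbf{v})$ for linearly independent eigenvectors) forces $\mathbf{R}=c\mathbf{I}$ with $c=1$ by the spectral-radius normalization. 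Your version is shorter, avoids the SVD case split entirely, and makes transparent exactly which two constraints do the work (entity normalization yields the eigenvector condition, $\rho(\mathbf{R})=1$ pins the scalar); the paper's version is more constructive, producing concrete violating pairs for each non-identity matrix, which ties back to the failure modes illustrated in Fig.~\ref{fig:case 1} and Fig.~\ref{fig:case 2}. One small point to tighten: your orthonormal-pair expansion shows equality of eigenvalues for linearly independent directions, and you should note that linearly dependent unit vectors ($\mathbf{v}=\pm\mathbf{u}$) trivially share the eigenvalue, so the constant really is global; this is a presentational detail, not a gap.
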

\begin{proof}
Please refer to the Appendix \ref{proof:unibi can model} 
\end{proof}

\subsection{Efficient Modeling}
\label{sec:efficient modeling}
\subsubsection{Efficient Modeling for Spectral Radius}
Although the proposed model has been proven to model the law of identity, it still has a practical disadvantage, since it is difficult to directly represent all matrices whose spectral raidus are $1$. In addition, it is also time-consuming to calculate the spectral radius $\rho(\cdot)$ via singular values decomposition (SVD). 

To avoid unnecessary decomposition, we divide a relation matrix into three parts $\mathbf{R} = \mathbf{R}_h\mathbf{\Sigma R}_t$ where $\mathbf{R}_h, \mathbf{R}_t$ are orthogonal matrices and $\mathbf{\Sigma} = \text{Diag}[\sigma_1, \dots, \sigma_n]$ is a positive semidefinite diagonal matrix. And we maintain the independence of these three components during training. Therefore, it becomes simple to obtain matrices whose spectral radius is 1, that is, $\frac{\mathbf{R}_h\mathbf{\Sigma}\mathbf{R}_t}{\sigma_{max}}$. And we transform the score function Eq. \ref{equ:unibi:constrain} into the following form.
\begin{equation}
\label{equ:svd}
    s(h,r,t) = \frac{\mathbf{h}^\top \mathbf{R}_h\mathbf{\Sigma R}_t \mathbf{t}}{\sigma_{max}\|\mathbf{h}\|\|\mathbf{t}\|},
\end{equation}
where $\sigma_{max}$ is the maximum among $\sigma_i$.

% the relation specific matrix by two orthogonal matrices $U, V$ and a positive semidefinite diagonal matrix $\Sigma$, which    directly, 

\subsubsection{Efficient Modeling for Orthogonal Matrix}
In addition, we find that the calculation of the orthogonal matrix is still time-consuming~\citep{tang-etal-2020-orthogonal}. To this end, we only consider the diagonal orthogonal block matrix, where each block is a low-dimensional orthogonal matrix. Specifically, we use $k$-dimensional rotation matrices to build $\mathbf{R}_h$ and $\mathbf{R}_t$. Taking $\mathbf{R}_h$ as an example $\mathbf{R}_h = \text{Diag}[\mathbf{SO}(k)_1, \dots, \mathbf{SO}(k)_\frac{n}{k}]$, where $\mathbf{SO}(k)_i$ denotes the $i$-th special orthogonal matrix, that is, the rotation matrix. 

The rotation matrix only represents the orthogonal matrices whose determinant are $1$ and does not represent the ones whose determinant are $-1$. To this end, we introduce two diagonal sign matrices of $n$-th order $\mathbf{S}_h, \mathbf{S}_t \in \mathbb{S}$ where
\begin{equation}
     \mathbb{S} = \{\mathbf{S}\ |\  \mathbf{S}_{ij} = \begin{cases} \pm 1, \  &if\  i = j, \\ 0, \  &if \  i \neq j.\end{cases}\}.
\end{equation}

Thus, we could rewrite the score function Eq. \ref{equ:svd} to
\begin{equation}
\label{equ:svd with sign}
s(h,r,t) = \frac{\mathbf{h^\top R}_h{\mathbf{S}_h\mathbf{\Sigma} \mathbf{S}_t\mathbf{R}_t \mathbf{t}}}{\sigma_{max}\|\mathbf{h}\|\|\mathbf{t}\|}.
\end{equation}

However, the sign matrix $\mathbf{S}_h$ and $\mathbf{S}_t$ are discrete. To address this problem, we notice that $\mathbf{S}_h$, $\mathbf{\Sigma}$, $\mathbf{S}_t$ can be merged into a matrix $\mathbf{\Xi}$ that
\begin{equation}
\label{equ:xi}
\mathbf{\Xi}_{ij} = \begin{cases} s_is_j\sigma_i, \ &if \ i = j, \\ 0 \ &if\  i \neq j.\end{cases}
\end{equation}
where $s_i = (\mathbf{S}_h)_{ii}$, $s_j = (\mathbf{S}_t)_{jj}$, $i,j = 1, \dots, n$ and $\mathbf{\Xi} = \text{Diag}[\xi_1,\dots, \xi_n]$. Thus, we incorporate the discrete matrices $\mathbf{S}_h,\mathbf{S}_t$ into the continuous matrix $\mathbf{\Xi}$.
\begin{equation}
\label{equ:final}
s(h,r,t) = \frac{\mathbf{h}^\top \mathbf{R}_h\mathbf{\Xi R}_t \mathbf{t}}{|\xi_{max}|\|\mathbf{h}\|\|\mathbf{t}\|},
\end{equation}
where $|\xi_{max}|$ is the maximum among $|\xi_i|$. 

% In this way, we avoid the additional loss of time and accuracy that may result from the iterative computation in the Gram-Schmidt process.

\subsection{Other benefits from scale normalization}
\label{sec:properties}
In addition to theoretical superiority, UniBi is more powerful and interpretable, since modeling the law of identity requires normalizing the scales that barely contain any useful knowledge. 
On the one hand, it is obvious that modeling the law of identity needs to avoid the cases in Fig. \ref{fig:case 1} and Fig. \ref{fig:case 2}, which requires normalizing the scales of entities and relations. 
On the other hand, it is counter-intuitive that the scale information is useless for bilinear based models. 

Scale information is treated as useless because what really matters is not the absolute values but the relative ranks of scores. And scale contributes nothing to the ranks, since they remain the same after we multiply these scores by a positive factor:
\begin{align}
s'(h,r,t) = (k_e\mathbf{h})^\top (k_r\mathbf{R})(k_e\mathbf{t}) = k_e^2k_r(\mathbf{h^\top Rt}) = k_e^2k_r\cdot s(h,r,t),
\end{align}
where $k_e,k_r > 0$. Therefore, we treat learning on scales as ineffective\footnote{Further discussion in Appendix \ref{app:ineffective learning}.}.

\subsubsection{Performance}
As illustrated in Fig. \ref{fig:identity scale performane}, UniBi has better performance, since it prevents ineffective learning with the least constraints. On the one hand, by preventing ineffective learning, UniBi focuses more on learning useful knowledge, which helps improve performance. On the other hand, it pays a negligible cost of expressiveness, since it adds only one equality constraint to each entity or relation, which is ignorable when the dimension $n$ is high.

In other words, although our scale normalization is a double-edged sword, its negative effect is negligible, and thus leads to a better performance. It should be noticed that the loss on expressiveness may outweighs the gain on learning, if scale normalization is replaced by a sticker one. For example, if we constrain the matrix to be orthogonal, the cost of expressiveness is no longer negligible, since an orthogonal matrix requires that each of its singular values be $1$, which is $n$ equality constraints.

\subsubsection{Interpretability}
\label{sec:ambiguity}

\begin{figure}[t]
    \centering
    % \subfigure[]{
    %     \includegraphics[width=3cm]{graphs/nips alice bob.pdf}
    %     \label{fig:alice bob}
    % }
    % \quad\quad
    \subfigure[]{
        \includegraphics[width=8cm]{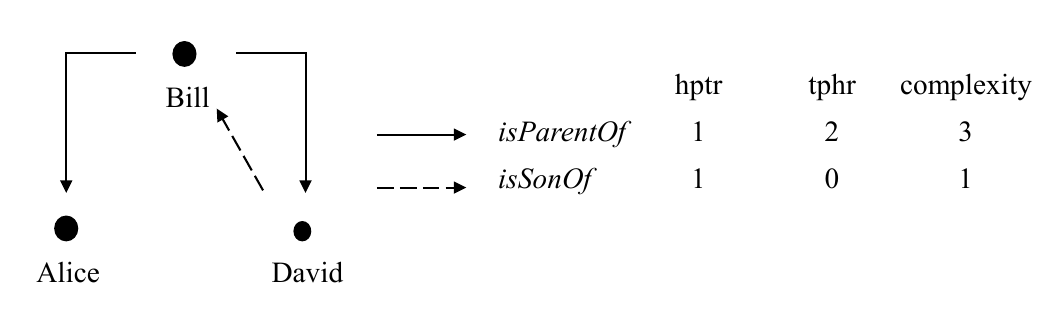}
        \label{fig:complexity case}
    }
    % \quad\quad
    \subfigure[]{
        \includegraphics[width=4cm]{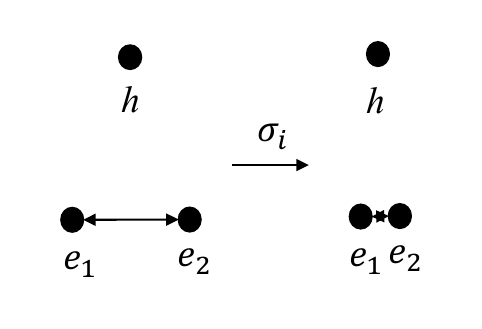}
        \label{fig:aggregation}
    }
    \caption{Complexity and contraction. (a) A toy example to show how to calculate complexity. (2) the aggregation corresponds to the singular values less than 1.}
    % \label{fig:my_label}
\end{figure}

% \subsubsection{Complexity and Singular Values}
% \label{sec:complexity and singular values}

% 我们首先思考，一个实体的向量的模值代表什么？
% 这里我们可以将其认为是一种确定性，初始的节点都是确定的，因此分布在球面上
% 然后经过不可逆的变换后，点的确定性就会下降
% 给两个例子，一个是三个构成的环状结构
% 另外一个是树状结构

% \begin{wrapfigure}{l}{4cm} % 靠文字内容的左侧
% \includegraphics[width=4cm]{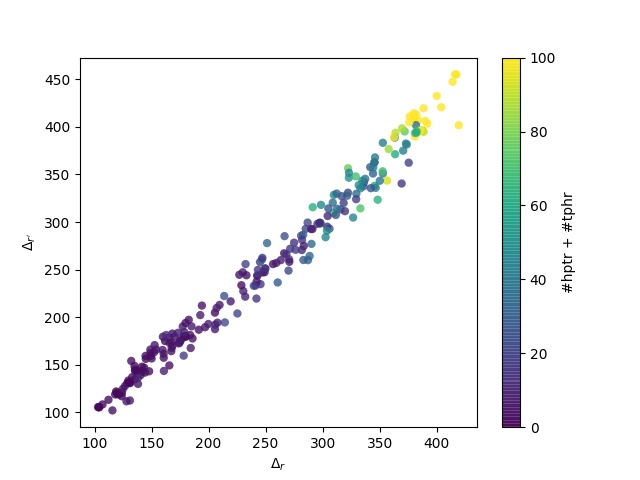}\
% \caption{Podala Palace, Tibet}
% \label{fig:delta}
% \end{wrapfigure}

% We consider any query $(h, r, ?)$ as a proxy entity and transform the original task into a task of finding the entity that is most similar to the proxy entity. Moreover, we note that the certainty of a proxy entity is only likely to be less than or equal to the head entity, which also holds when the head entity is a proxy. For example, consider a couple Bob and Alice,  

% In this perspective, we note that Theorem \ref{prop:identity constrain} implies that the norm of the proxy entity $\mathbf{R^\top h}$ is less than or equal to 1. Since the norm of each entity in KG is normalized, it could be used to represent the 
% From Theorem \ref{prop:identity constrain}, we notice that the query entity, the combination of
In addition to performance, scale normalization also helps us to understand complex relations. Complex relations are defined by whither hptr (\textit{h}ead \textit{p}er \textit{t}ail of a \textit{r}elation) or tphr (\textit{t}ail \textit{p}er \textit{h}ead of a \textit{r}elation) is higher than a special threshold 1.5~\citep{transh}. Therefore, all relations are divided into 4 types, i.e. 1-1, 1-N, N-1, and N-N. However, we think this division is too coarse-grained and suggest a fine-grained continuing metric complexity instead. To better demonstrate this idea, we gave an example in Fig. \ref{fig:complexity case} and the definition of complexity as follows.

\begin{definition}
\label{def:complexity}
The complexity of a relation is the sum of its hptr and tphr.
\end{definition}

% We first define the complexity of relations by hptr (\textit{h}ead \textit{p}er \textit{t}ail of a \textit{r}elation) and tphr (\textit{t}ail \textit{p}er \textit{h}ead of a \textit{r}elation):

% It should be noted that we modify the definition of complex relations~\citep{transh}, because we want to evaluate complexity of them in a more granular way, rather than simply dividing it into three categories, i.e., $1-N$, $N-1$, and $N-N$. Specifically, their complexity is measured by the sum of hptr (\textit{h}ead \textit{p}er \textit{t}ail of a \textit{r}elation) and tphr (\textit{t}ail \textit{p}er \textit{h}ead of a \textit{r}elation)~\citep{transh}. For example, the two relations in Fig. \ref{fig:tree} have the same complexity, while they are less complex than the one in Fig. \ref{fig:cycle}.

% Since 

% We adopt the idea that complex relations are handled by aggregating entities through projection~\citep{transh,transr}. As demonstrated in Fig. \ref{fig:aggregation}, the entities $e_1$ and $e_2$ are pulled very close, which makes it easier for the model to give them the same score. 

Intuitively, complex relations are handled by aggregating entities through projection~\citep{transh,transr}, which implies that the higher the complexity of a relation, the stronger its aggregation effect, and vice versa.
% 引一个数学书,反正就是说这个是由singular value调控的

We note that this aggregation effect can be well characterized by the relative ratio, or imbalance degree, of singular values of the matrices of relations. For any relation matrix $\mathbf{R} = \mathbf{U\Sigma V^\top}$, both $\mathbf{U}$ and $\mathbf{V}$ are isometry, and only the singular values of the scaling matrix $\mathbf{\Sigma}$ contribute to the aggregation.
% 这里需要改
Moreover, the singular values of UniBi are less than or equal to 1\footnote{We discuss this characteristic in more depth from the perspective of group in Appendix \ref{app:monoid}.}, since the spectral radius, i.e. the maximum singular value, are normalized. This shows a promising correspondence between the singular values of our model and the aggregation and further to the complexity, as demonstrated in Fig. \ref{fig:aggregation}.
Therefore, we could use singular values to represent the complexity of relations, which increases the interpretability of UniBi.

It is worth mentioning that this interpretabity can be transferred to other bilinear based models if they normalize the spectral radius of their relation matrix as UniBi does.

\section{Experiment}
\label{sec:experiment}
In this section, we give the experiment settings in Section \ref{sec:exp setting}. We verify that UniBi is capable to model the law of identity, while previous bilinear based models are not in Section \ref{sec:model identity}. UniBi is comparable to previous SOTA bilinear models in the link prediction task, as shown in Section \ref{sec:main results}. In addition, we demonstrate the robustness of UniBi in Section \ref{sec:robust} and the interpretability about complexity in Section \ref{sec:ambiguity analysis}.

\begin{table}[h]
\centering
\caption{Statistics of the benchmark datasets.}
\begin{tabular}{ccccccc}
    % \centering
    \toprule
    Dataset & $|\mathcal{E}|$ & $|\mathcal{R}|$ & Training & Validation & Test \\
    \midrule
    WN18RR &  40,943 & 11 & 86,835 & 3,034 & 3,134 \\
    FB15k-237 & 14,541 & 237 & 272,115 & 17,535 & 20,466 \\
    YAGO3-10-DR & 122,873 & 36 & 732,556 & 3,390 & 3,359 \\
    \bottomrule
\end{tabular}
\label{tab:statistics}
\end{table}

\subsection{Experiment Settings}
\label{sec:exp setting}

\textbf{Dataset} \hspace{1mm} We evaluate models on three commonly used benchmarks, i.e. WN18RR~\citep{conve}, FB15k-237~\citep{fb237} and YAGO3-10-DR~\citep{reeval}. They are proposed by removing the reciprocal triples that cause data leakage in WN18, FB15K and YAGO3-10 respectively. Their statistics are listed in Tbl. \ref{tab:statistics}.

\textbf{Evaluation metrics} \hspace{1mm} We use Mean Reciprocal Rank (MRR) and Hits@k (k = 1, 3, 10) as the evaluation metrics. MRR is the average inverse rank of the correct entities that are insensitive to outliers. Hits@k denotes the proportion of correct entities ranked above k. 

\textbf{Baselines}\hspace{1mm} Here we consider two specific versions  of UniBi: UniBi-O(2), UniBi-O(3), which use rotation matrices in $2$ and $3$ dimensions to construct the orthogonal matrix. To be specific, we use the unit complex value and the unit quaternion to model the 2D and 3D rotations using the $2\times 2$ and $4\times4$ matrices, respectively. For more details, see the Appendix \ref{app:rotation matrix}.

UniBi is compared with these bilinear models: RESCAL~\citep{rescal}, CP~\citep{cp}, ComplEx~\citep{complex}, and QuatE~\citep{quate}. In addition, it also compared to other models: RotatE~\citep{sun2018rotate}, MurE~\citep{murp} and RotE~\citep{rote}, Turcker~\citep{balavzevic2019tucker} and ConvE~\citep{conve}, PairRE~\cite{pairre}, TripleRE. 

\textbf{Optimization}\hspace{1mm} We adopt the reciprocal setting~\citep{n3}, which creates a reciprocal relation $r'$ for each $r$ and a new triple $(e_k, r'_j, e_i)$ for each $(e_i, r_j, e_k) \in \mathcal{K}$. Instead of using Cross Entropy directly~\citep{n3,dura,quate}, we add an extra scalar $\gamma > 0$ before softmax function. Since UniBi is bounded, it brings an upper bound to loss that makes the model difficult to optimize as discussed by~\citet{normface}.
\begin{equation}
\label{equ:loss function}
L = -\sum_{(h,r,t)\in \mathcal{K}_{train}}\log\left({\frac{\exp(\gamma\cdot s(h,r,t))}{\sum_{t' \in \mathcal{E}}{\exp(\gamma \cdot s(h, r, t'))}}} \right)+ \lambda \cdot Reg(h,r,t),
\end{equation}
where $Reg(h,r,t)$ is the regularization term and $\lambda > 0$ is its factor. Specifically, we only take $Reg(h,r,t)$ as DURA~\citep{dura} in experiments, since it significantly outperforms other regularization terms. In addition, $\gamma$ is set to $1$ for previous methods and greater than $1$ for UniBi. And we set the dimension $n$ to $500$. For other details on the implementation, see Appendix \ref{app: hyperparameters}. 
% \ref{tab:hyper para}
\subsection{Modeling Prior Property}
\label{sec:model identity}
In this part, we verify that 1) UniBi is capable to model the law of identity while previous models fail, and 2) both constraints on the embedding of entities and relations are indispensable. We explicitly add \textit{identity} as a new relation to benchmarks and use its corresponding matrix to determine whether the uniqueness is modeled. In particular, since entities are different, modeling the law means model the uniqueness of \textit{identity}, which requires the matrix of \textit{identity} relation is supposed to converge to the identity matrix $\mathbf{I}$ or a scaled one. To evaluate it, we introduce a new metric imbalance degree $\Delta = (\sum_i \frac{\sigma_i}{\sigma_{max}} - 1)^2$.

We first compare UniBi with CP~\citep{cp} and RESCAL~\citep{rescal}, the least and most expressive bilinear model on FB15k-237. Besides, we also apply DURA~\citep{dura} to models to explore whether these methods are able to model the law of identity under extra regularization. As demonstrated in Fig. \ref{fig:identity is harmful for other models}, the imbalance degree $\Delta$ of UniBi converges
to $0$ while others fails, which verifies that UniBi is capable to uniquely model \textit{identity}. In addition, the imbalance of other models decreases to some extent when using DURA, yet they are still unable to uniquely model \textit{identity}. Then, to show that UniBi can uniquely converge to \textit{identity}, we use two matrices $\mathbf{R}_1$ and $\mathbf{R}_2$ to model it independently. As shown in Fig. \ref{fig:converge different datasets}, the error between $\mathbf{R}_1$ and $\mathbf{R}_2$ also converges to $0$, which means that they all converge to $\mathbf{I}$.

We then perform an ablation study to verify that both the entity constraint (EC) and the relation constraint (RC) are needed to model \textit{identity} uniquely. The experiments show that only using either constraint is not enough to model the uniqueness of \textit{identity}, as illustrated in Fig. \ref{fig:both regular are indispensable}. And this verify the existence of problems shown in Fig. \ref{fig:case 1} and Fig. \ref{fig:case 2}.

\subsection{Main Results}
\label{sec:main results}
% 那么这种约束是否会阻碍模型的效果的上限呢？
% 这是可能的，但是从目前的效果来看却不是的
% 给出上了DURA之后的结果
% 从拉格朗日函数分析它们之间的关系
% We can transform the constrained optimization in Equ.\eqref{equ:constain} to Lagrangian function.
% \begin{equation}
%     \begin{aligned}
%     \min_{\mathbf{h,r,t},\lambda, \mu}  \  &\sum_{(h_i,r_j,t_k)\in\mathcal{K}}f(h_i,r_j,t_k) +  \lambda_i^h(\|\mathbf{h}_i\|^2-1) +\lambda^t_k(\|\mathbf{t}_k\|^2 - 1) \\
%     &+ \mu^h_j(\|\mathbf{R}_j^\top \mathbf{h}_i\|^2 - 1) + \mu^t_j(\|\mathbf{R}_j\mathbf{t}_k\|^2 - 1) \\
%     s.t.\ &\mu_j^h,\mu_j^t \geq 0.
%     \end{aligned}
% \end{equation}

\begin{figure}[t]
    \centering
    % \subfigure[]{
    %     \includegraphics[width=3cm]{graphs/nips case 1.pdf}
    %     \label{fig:other models fail to model}
    % }
    % \quad\quad
    \subfigure[UniBi V.S. others.]{
        \includegraphics[width=4.3cm]{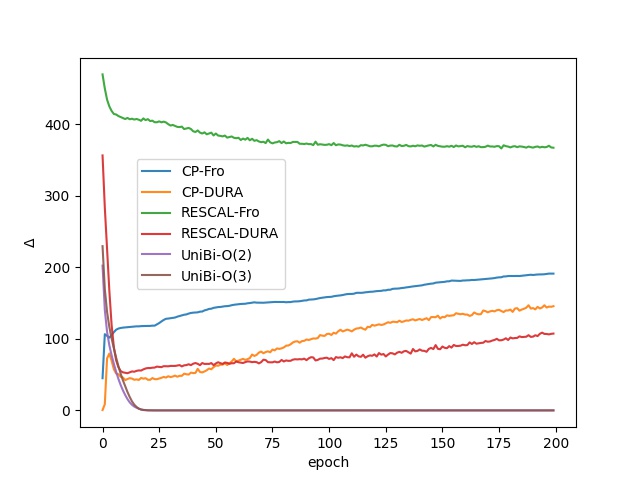}
        % \caption{Compare to different models.}
        \label{fig:identity is harmful for other models}
    }
    \subfigure[Error between different matrices converges.]{
        \includegraphics[width=4.3cm]{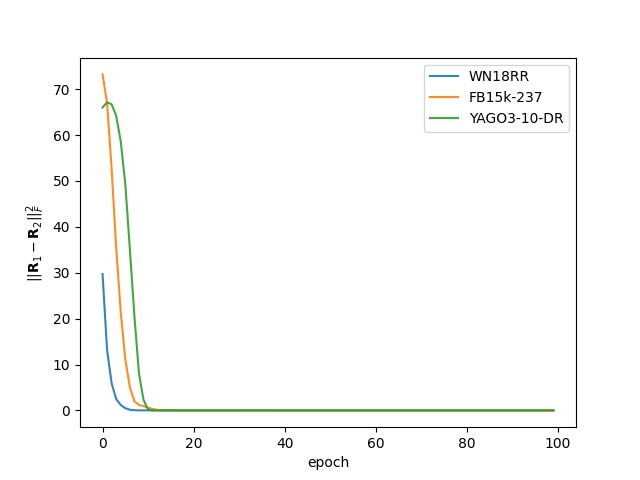}
        \label{fig:converge different datasets}
    }
    \subfigure[Abaltion of constrains.]{
        \includegraphics[width=4.3cm]{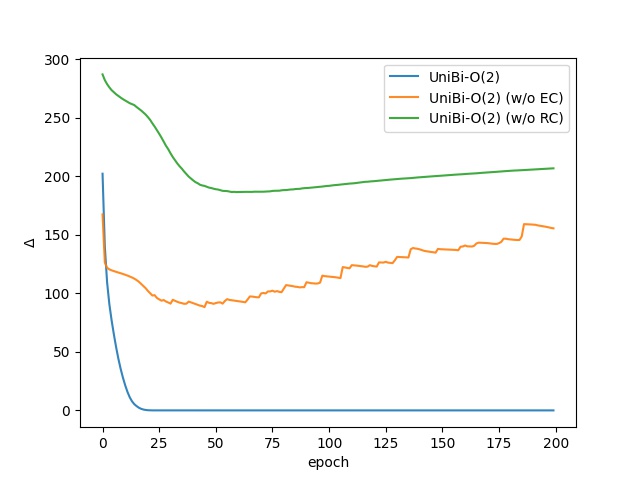}
        \label{fig:both regular are indispensable}
        % \caption{Abaltion of constrains.}
    }

    \caption{UniBi is capable to uniquely model \textit{identity}. (a) the imbalance degree ($\Delta$) of UniBi converges to $0$ while others diverge. (b) The errors between different matrices modeling \textit{identity} converge to $0$ on different datasets.  (c) Both entity constrain (EC) and relation constrain (RC) are indispensable for UniBi to model \textit{identity}.}
    % \label{fig:my_label}
\end{figure}    

\begin{table}[t]
    \centering
    \caption{Evaluation results on WN18RR, FB15k-237 and YAGO3-10-DR datasets. We reimplement RotE, CP, RESCAL, ComplEx with $n=500$ and denoted by \dag, while we take results on WN18RR and FB15k-237 from the origin papers and YAGO3-10-DR from~\citet{reeval}. Best results are in \textbf{bold} while the seconds are \underline{underlined}.}
    \resizebox{\textwidth}{!}{
    \begin{tabular}{lccccccccc}
    \toprule
    \multicolumn{1}{c}{} & \multicolumn{3}{c}{\textbf{WN18RR}}& \multicolumn{3}{c}{\textbf{FB15K-237}}& \multicolumn{3}{c}{\textbf{YAGO3-10-DR}} \\
    Model & MRR & Hits@1  & Hits@10 & MRR & Hits@1  & Hits@10 & MRR & Hits@1 & Hits@10  \\ \midrule

DistMult & 0.43 & 0.39 & 0.49 & 0.241 & 0.155 & 0.419 & 0.192 & 0.133 &  0.307\\ 
ConvE & 0.43 & 0.40 &  0.52 & 0.325 & 0.237 & 0.501 & 0.204 & 0.147 & 0.315 \\
TuckER & 0.470 & 0.443 &  0.526 & 0.358 & 0.266 &  0.544 & 0.207 & 0.148 &  0.320\\
QuatE & 0.488 & 0.438 &  0.582 & 0.348 & 0.248 &  0.550 &  - & -& -\\
RotatE & 0.476  & 0.428  & 0.571 & 0.338 & 0.241  & 0.533 & 0.214 & 0.153 &  0.332\\
MurP & 0.481 & 0.440 &  0.566 & 0.335 & 0.243  & 0.518 & - & - & -\\
RotE & \underline{0.494} & 0.446  & \textbf{0.585} & 0.346 & 0.251 & 0.538 & - & -  & - \\
PairRe & - & - & - & 0.351 & 0.256 & 0.544 & - & - & - \\
TripleRE & - & - & - & 0.351 & 0.251 & 0.552 & - & -& -\\
CP\dag & $0.457_{\pm 0.001}$ & $0.414_{\pm 0.002}$  & $0.549_{\pm 0.003}$ & $0.361_{\pm 0.001}$ & $0.266_{\pm 0.001}$ & $0.551_{\pm 0.001}$ & $0.241_{\pm 0.001}$ & $0.175_{\pm 0.002}$ & $0.370_{\pm 0.003}$\\
ComplEx\dag & $0.487_{\pm 0.002}$  & $0.445_{\pm 0.001}$ & $\underline{0.571}_{\pm 0.002}$ & $0.363_{\pm 0.001}$ & $0.269_{\pm 0.001}$  & $0.552_{\pm 0.002}$ & $0.238_{\pm 0.001}$ & $0.174_{\pm 0.002}$  & $0.360_{\pm 0.004}$\\
RESCAL\dag &$\textbf{0.495}_{\pm 0.001}$ & $\textbf{0.452}_{\pm 0.002}$ &  $0.575_{\pm 0.002}$ &  $0.364_{\pm 0.003}$ & $\underline{0.272}_{\pm 0.003}$ &  $0.547_{\pm 0.002}$ & $0.233_{\pm 0.003}$ & $0.168_{\pm 0.004}$ & $0.360_{\pm 0.004}$\\

\midrule
\rowcolor{lightgray} UniBi-O(2) & $0.487_{\pm 0.002}$ & $0.46_{\pm 0.002}$ & $0.566_{\pm 0.002}$ & $\textbf{0.370}_{\pm 0.001}$ & $\textbf{0.274}_{\pm 0.001}$ & $\textbf{0.561}_{\pm 0.002}$ & $\textbf{0.247}_{\pm 0.001}$ & $\underline{0.179}_{\pm 0.002}$ &  $\underline{0.376}_{\pm 0.002}$\\
\quad  - w/o constraint & $0.488_{\pm 0.001}$ & $\underline{0.447}_{\pm 0.002}$ & $0.568_{\pm 0.003}$ & $0.361_{\pm 0.001}$ & $0.267_{\pm 0.001}$ & $0.550_{\pm 0.001}$ & $0.242_{\pm 0.001}$ & $0.176_{\pm 0.001}$ & $0.372_{\pm 0.002}$ \\ 
\rowcolor{lightgray} UniBi-O(3)  & $0.492_{\pm 0.001}$ & $\textbf{0.452}_{\pm 0.001}$ & $\underline{0.571}_{\pm 0.001}$ & $\underline{0.369}_{\pm 0.001}$ & $\textbf{0.274}_{\pm 0.001}$ & $\underline{0.561}_{\pm 0.001}$ & $\underline{0.246}_{\pm 0.001}$ & $\textbf{0.180}_{\pm 0.001}$ & $\textbf{ 0.377}_{\pm 0.001}$\\ 

% UniBi-O(2)-UN & 0.487 & 0.446 & 0.502 &0.567 & & & & & 0.237 & 0.172 & 0.255 & 0.361\\
\quad - w/o constraint & $0.488_{\pm 0.001}$ & $0.446_{\pm 0.002}$ & $0.567_{\pm 0.003}$ & $0.361_{\pm 0.001}$ & $0.265_{\pm 0.001}$ & $0.550_{\pm 0.001}$ & $0.241_{\pm 0.001}$ & $0.175_{\pm 0.002}$ &  $0.370_{\pm 0.003}$\\ 
\bottomrule
    \end{tabular}}
    \label{tab:main table}
\end{table}
In this part, we demonstrate that the constraints helps UniBi to achieve better performance. We mainly compared our model with previous SOTA models, i.e. CP~\citep{cp}, ComplEx~\citep{complex} and RESCAL~\citep{rescal} using DURA regularization. Although these models have been implemented by~\citet{dura}, the dimensions of CP and ComplEx are very high and have not been tested on YAGO3-10-DR, so we reimplement them in this paper. In addition, we further remove the constraint of UniBi-O(n) as ablations to eliminate the influence of other factors.

In Tbl. \ref{tab:main table}, UniBi achieves comparable results to previous bilinear based models and the unconstrained versions. UniBi is only slightly and justifiably below RESCAL on WN18RR, since RESCAL needs require much more time and space\footnote{Detailed in Appendix \ref{app:time and space}.}. 
% Besides, it should be notice, the unconstrained model can be treated as 
% unconstrains 这里要如何说？

\subsection{UniBi prevents ineffective learning}
\label{sec:robust}
% 这里我们首先看一下不加约束的前提下模型的效果
% 给个表，效果相对其他的好的不得了
% 为什么？我们从优化的角度来解释
% 和我们的图unit ball结合起来说一下
% 因此这说明了我们的模型带来的约束的有效性
% \begin{table}[h]
%     \centering
%     \caption{caption}
%     \resizebox{\textwidth}{!}{
%     \begin{tabular}{lcccccccccccc}
%     \toprule
%     \multicolumn{1}{c}{} & \multicolumn{4}{c}{\textbf{WN18RR}}& \multicolumn{4}{c}{\textbf{FB15K237}}& \multicolumn{4}{c}{\textbf{YAGO3-10-DR}} \\
%     Model & MRR & Hits@1 & Hits@3 & Hits@10 & MRR & Hits@1 & Hits@3 & Hits@10 & MRR & Hits@1 & Hits@3 & Hits@10  \\ \midrule
%     RUM-O(2) & 0.488 & 0.448 & 0.500 & 0.566 & 0.370 & 0.275 & 0.408 & 0.562 & 0.242 & 0.174 & 0.256 & 0.370\\ 
%     RUM-O(3) & 0.491 & 0.450 &  0.504 & 0.571 & 0.369 & 0.274 & 0.407 & 0.560 & \\ 
%     \bottomrule
%     \end{tabular}}

%     \label{tab:without reg}
% \end{table}

\begin{figure}[t]
    \centering
    \subfigure[Ablation of regularization.]{
        \includegraphics[width=6.7cm]{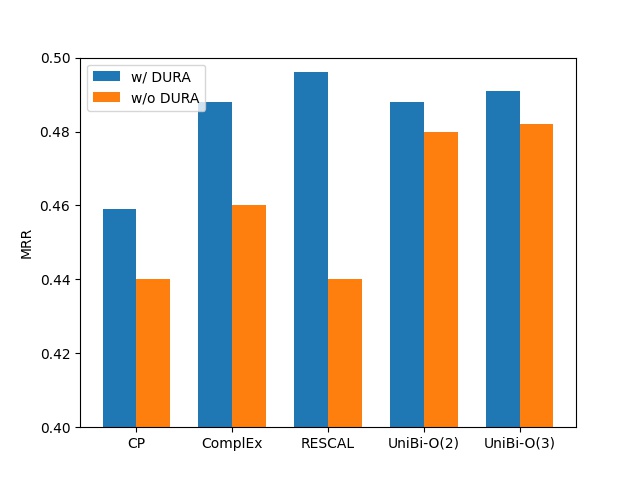}
        \label{fig:without dura}
        }
    \subfigure[Ablation of constrains.]{
        \includegraphics[width=6.7cm]{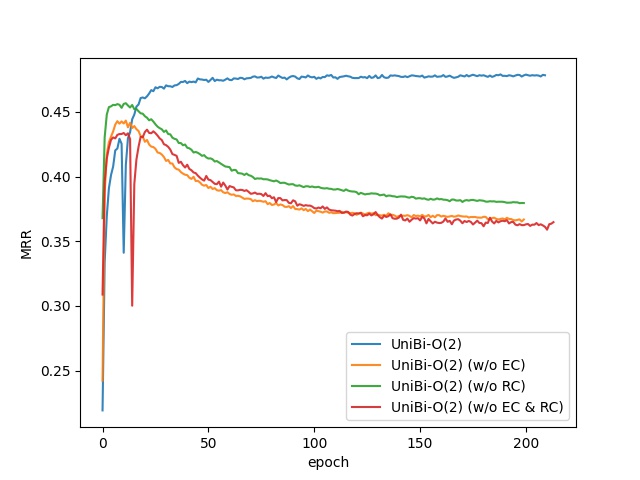}
        \label{fig:valid mrr}
        }
        
        \caption{UniBi benefits from preventing ineffective learning. (a) UniBi less relies on regularization and other models are not. (b) Neither entity constraint (EC) nor relation constrain (RC) alone stops the sliding of performance.}
    % \label{fig:my_labelssss}
\end{figure}

In this part, we verify the superiority of UniBi comes from preventing ineffective learning. We conduct further comparisons without regularization. In addition, we also adopt EC and RC in Section \ref{sec:model identity} to study the effect of both constraints. All experiments are implemented on WN18RR.

On the one hand, UniBi decreases slightly, while others decrease significantly when the regularization term is removed, as demonstrated in Fig. \ref{fig:without dura}. It shows that learning of UniBi is less dependent on extra regularization, since it is better at learning by preventing the ineffective part. On the other hand, we illustrate the MRR metric of UniBi and its ablation models on validation set as epoch grows in Fig. \ref{fig:valid mrr}. It shows that either constrain alleviates overfitting to some extend but fails to prevent the downward sliding behind since the scale of the unconstrained part may diverge. Thus, both constraints are verified to be indispensable for preventing the ineffective learning and improving performance of UniBi.

% In this graph, UniBi maintains a high performance after ascending while other models fall quickly. Therefore, we 
% find UniBi is robust and does not rely on regularization terms to prevent from overfitting.

% Moreover, to show that UniBi is not sensitive to $\gamma$, we conduct UniBi with different values and illustrate the results in Fig.\ref{}. UniBi maintains a good results when $\gamma$ goes higher while its bad performance in low $\gamma$ is because gradient is very small and hard to optimize as discussed by Wang et al.~\citep{normface}.

\subsection{Correlation to Complexity }
\label{sec:ambiguity analysis}
% 讨论ambiguity的问题
% 越是hptr和tphr高，就说明这个关系越是有这种多映射关系
% 那么经过这个relation之后，ambiguity就会越高
% 实验设置：在实验中，我们用delta 和 delta’来分别表示r和r'的ambiguity,
% 实验结论：1. r和r'的ambiguity基本上是相同的, 2. ambiguity和那两个数之和高度相关，这验证了我们的猜想
To verify the statement in Section \ref{sec:ambiguity}, we study the connection between singular values and the complexity of each relation on three benchmarks, where complexity is calculated following definition \ref{def:complexity}. Furthermore, we measure the singular values of a relation by the imbalance degree $\Delta$. To differentiate $\Delta$ of a relation $r$ and its reciprocal relation $r'$, we use $\Delta_r$ and $\Delta_{r'}$ to denote them.

As demonstrated in Fig. \ref{fig:delta datasets}, we find that singular values are highly correlated with the complexity of a relation. Furthermore, we notice that $\Delta_r$ and $\Delta_{r'}$ are very close even if a relation is unbalanced (1-N or N-1), which shows that complexity is handled by aggregation regardless of direction.

\begin{figure}[t]
    \centering
    \subfigure[WN18RR]{
        \includegraphics[width=4.3cm]{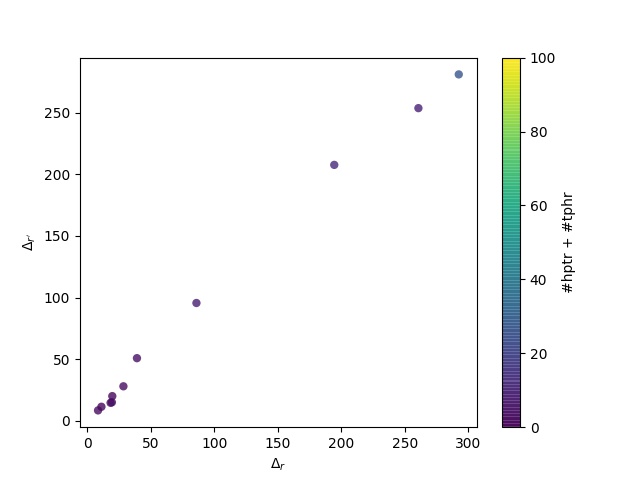}
        \label{fig:delta wn18rr}
    }
    \subfigure[FB15k-237]{
        \includegraphics[width=4.3cm]{graphs/delta_fb237.jpg}
        \label{fig:delta fb237}
    }
    \subfigure[YAGO3-10-DR]{
        \includegraphics[width=4.3cm]{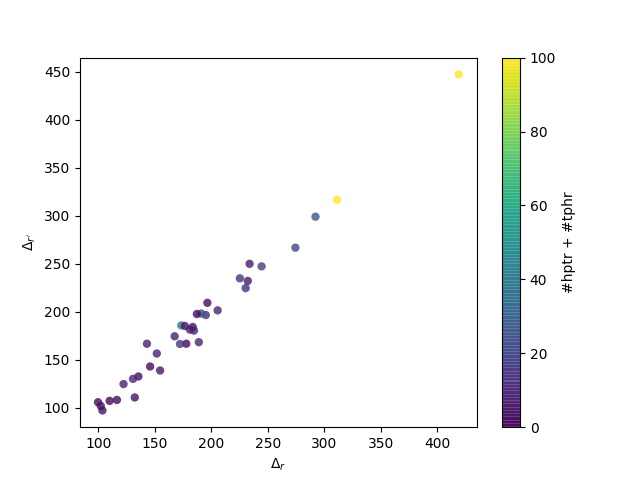}
        \label{fig:delta yago}
    }
    \caption{The imbalance degree ($\Delta$) and complexity ($\#$ hptr + $\#$ tphr) of relations in WN18RR, FB15k-237 and YAGO3-10-DR respectively. Two metric are highly correlated and the imbalance of a relation ($\Delta_r$) and the imbalance of its reciprocal one ($\Delta_{r'}$) are very close.}
    \label{fig:delta datasets}
\end{figure}

% \subsection{Ablation Studies}

\section{Conclusion}
% \textcolor{red}{TODO}
In this paper, we propose a new perspective, prior property, on analysis and modeling KGs beyond posterior properties. We discover a prior property named the law of identity. Moreover, We notice that bilinear based models fail to model this property and propose a proved solution named UniBi. Specifically, UniBi applies well-designed normalization on the embedding of entities and relations with minimal constraints. 

% In this paper, we reveal that previous bilinear models fail to model the uniqueness of relation \textit{identity} required by general frameworks based on group. To address this problem with negligible cost, we propose UniBi, which has been proven to handle it by capturing entities in a unit ball. 
Furthermore, UniBi gains other advantages through the normalization. On the one hand, the normalization prevents ineffective learning and leads to better performance; on the other hand, it reveals that the relative ratio of singular values corresponds to the complexity of relations and improves interpretability.
% Furthermore, we demonstrate that UniBi has inherent regularization from the perspective of Lagrangian function and the interpretability from singular values.
% Experiments verify the above properties and that UniBi is capable to uniquely model \textit{identity} without the cost of performance. 
% Therefore, UniBi reconciles the framework and bilinear based models in terms of \textit{identity} and paves the way for further studies of both.

In summary, we believe the question of prior property and the paradigm of UniBi can provide interesting and useful directions for the studies of bilinear based models.
% 找到一种更简单的model identity的模型，不需要这么多额外的操作
% 找到建模identity的充要约束。

% \section{Submission of papers to NeurIPS 2022}

% Please read the instructions below carefully and follow them faithfully.

% \subsection{Style}

% Papers to be submitted to NeurIPS 2022 must be prepared according to the
% instructions presented here. Papers may only be up to {\bf nine} pages long,
% including figures. Additional pages \emph{containing only acknowledgments and
% references} are allowed. Papers that exceed the page limit will not be
% reviewed, or in any other way considered for presentation at the conference.

% The margins in 2022 are the same as those in 2007, which allow for $\sim$$15\%$
% more words in the paper compared to earlier years.

% Authors are required to use the NeurIPS \LaTeX{} style files obtainable at the
% NeurIPS website as indicated below. Please make sure you use the current files
% and not previous versions. Tweaking the style files may be grounds for
% rejection.

\subsection{Retrieval of style files}

The style files for NeurIPS and other conference information are available on
the World Wide Web at
\begin{center}
  \url{http://www.neurips.cc/}
\end{center}
The file \verb+neurips_2022.pdf+ contains these instructions and illustrates the
various formatting requirements your NeurIPS paper must satisfy.

The only supported style file for NeurIPS 2022 is \verb+neurips_2022.sty+,
rewritten for \LaTeXe{}.  \textbf{Previous style files for \LaTeX{} 2.09,
  Microsoft Word, and RTF are no longer supported!}

The \LaTeX{} style file contains three optional arguments: \verb+final+, which
creates a camera-ready copy, \verb+preprint+, which creates a preprint for
submission to, e.g., arXiv, and \verb+nonatbib+, which will not load the
\verb+natbib+ package for you in case of package clash.

\paragraph{Preprint option}
If you wish to post a preprint of your work online, e.g., on arXiv, using the
NeurIPS style, please use the \verb+preprint+ option. This will create a
nonanonymized version of your work with the text ``Preprint. Work in progress.''
in the footer. This version may be distributed as you see fit. Please \textbf{do
  not} use the \verb+final+ option, which should \textbf{only} be used for
papers accepted to NeurIPS.

At submission time, please omit the \verb+final+ and \verb+preprint+
options. This will anonymize your submission and add line numbers to aid
review. Please do \emph{not} refer to these line numbers in your paper as they
will be removed during generation of camera-ready copies.

The file \verb+neurips_2022.tex+ may be used as a ``shell'' for writing your
paper. All you have to do is replace the author, title, abstract, and text of
the paper with your own.

\bibliographystyle{plainnat}
\bibliography{kge}

\begin{thebibliography}{41}
\providecommand{\natexlab}[1]{#1}
\providecommand{\url}[1]{\texttt{#1}}
\expandafter\ifx\csname urlstyle\endcsname\relax
  \providecommand{\doi}[1]{doi: #1}\else
  \providecommand{\doi}{doi: \begingroup \urlstyle{rm}\Url}\fi

\bibitem[Akrami et~al.(2020)Akrami, Saeef, Zhang, Hu, and Li]{reeval}
Farahnaz Akrami, Mohammed~Samiul Saeef, Qingheng Zhang, Wei Hu, and Chengkai
  Li.
\newblock Realistic re-evaluation of knowledge graph completion methods: An
  experimental study.
\newblock In \emph{{SIGMOD}}, pages 1995--2010. {ACM}, 2020.

\bibitem[An et~al.(2018)An, Chen, Han, and Sun]{text}
Bo~An, Bo~Chen, Xianpei Han, and Le~Sun.
\newblock Accurate text-enhanced knowledge graph representation learning.
\newblock In \emph{{NAACL-HLT}}, pages 745--755, 2018.

\bibitem[Balazevic et~al.(2019{\natexlab{a}})Balazevic, Allen, and
  Hospedales]{balavzevic2019tucker}
Ivana Balazevic, Carl Allen, and Timothy~M. Hospedales.
\newblock Tucker: Tensor factorization for knowledge graph completion.
\newblock In \emph{{EMNLP/IJCNLP} {(1)}}, pages 5184--5193, 2019{\natexlab{a}}.

\bibitem[Balazevic et~al.(2019{\natexlab{b}})Balazevic, Allen, and
  Hospedales]{murp}
Ivana Balazevic, Carl Allen, and Timothy~M. Hospedales.
\newblock Multi-relational poincar{\'{e}} graph embeddings.
\newblock In \emph{{NeurIPS}}, pages 4465--4475, 2019{\natexlab{b}}.

\bibitem[Bordes et~al.(2013)Bordes, Usunier, Garc{\'{\i}}a{-}Dur{\'{a}}n,
  Weston, and Yakhnenko]{bordes2013translating}
Antoine Bordes, Nicolas Usunier, Alberto Garc{\'{\i}}a{-}Dur{\'{a}}n, Jason
  Weston, and Oksana Yakhnenko.
\newblock Translating embeddings for modeling multi-relational data.
\newblock In \emph{{NeurIPS}}, pages 2787--2795, 2013.

\bibitem[Chami et~al.(2020)Chami, Wolf, Juan, Sala, Ravi, and R{\'{e}}]{rote}
Ines Chami, Adva Wolf, Da{-}Cheng Juan, Frederic Sala, Sujith Ravi, and
  Christopher R{\'{e}}.
\newblock Low-dimensional hyperbolic knowledge graph embeddings.
\newblock In \emph{{ACL}}, pages 6901--6914, 2020.

\bibitem[Chao et~al.(2021)Chao, He, Wang, and Chu]{pairre}
Linlin Chao, Jianshan He, Taifeng Wang, and Wei Chu.
\newblock Pairre: Knowledge graph embeddings via paired relation vectors.
\newblock In \emph{{ACL/IJCNLP(1)}}, pages 4360--4369, 2021.

\bibitem[Dettmers et~al.(2018)Dettmers, Minervini, Stenetorp, and
  Riedel]{conve}
Tim Dettmers, Pasquale Minervini, Pontus Stenetorp, and Sebastian Riedel.
\newblock Convolutional 2d knowledge graph embeddings.
\newblock In \emph{{AAAI}}, pages 1811--1818, 2018.

\bibitem[Gao et~al.(2021)Gao, Yang, Yang, Zakari, Owusu, and Qin]{quatde}
Haipeng Gao, Kun Yang, Yuxue Yang, Rufai~Yusuf Zakari, Jim~Wilson Owusu, and
  Ke~Qin.
\newblock Quatde: Dynamic quaternion embedding for knowledge graph completion,
  2021.

\bibitem[He et~al.(2017)He, Balakrishnan, Eric, and Liang]{dialogue}
He~He, Anusha Balakrishnan, Mihail Eric, and Percy Liang.
\newblock Learning symmetric collaborative dialogue agents with dynamic
  knowledge graph embeddings.
\newblock In \emph{{ACL}}, pages 1766--1776. Association for Computational
  Linguistics, 2017.

\bibitem[Hitchcock(1927)]{cp}
Frank~L Hitchcock.
\newblock The expression of a tensor or a polyadic as a sum of products.
\newblock \emph{Journal of Mathematics and Physics}, 6\penalty0 (1-4):\penalty0
  164--189, 1927.

\bibitem[Jacobson(2012)]{jacobson2012basic}
Nathan Jacobson.
\newblock \emph{Basic algebra I}.
\newblock Courier Corporation, 2012.

\bibitem[Ji et~al.(2021)Ji, Pan, Cambria, Marttinen, and Yu]{survey1}
Shaoxiong Ji, Shirui Pan, Erik Cambria, Pekka Marttinen, and Philip~S. Yu.
\newblock A survey on knowledge graphs: Representation, acquisition, and
  applications.
\newblock \emph{IEEE Trans Neural Netw Learn Syst.}, pages 1--21, 2021.

\bibitem[Kingma and Ba(2015)]{adam}
Diederik~P. Kingma and Jimmy Ba.
\newblock Adam: {A} method for stochastic optimization.
\newblock In \emph{{ICLR}}, 2015.

\bibitem[Lacroix et~al.(2018)Lacroix, Usunier, and Obozinski]{n3}
Timoth{\'{e}}e Lacroix, Nicolas Usunier, and Guillaume Obozinski.
\newblock Canonical tensor decomposition for knowledge base completion.
\newblock In \emph{{ICML}}, volume~80, pages 2869--2878, 2018.

\bibitem[Lin et~al.(2015)Lin, Liu, Sun, Liu, and Zhu]{transr}
Yankai Lin, Zhiyuan Liu, Maosong Sun, Yang Liu, and Xuan Zhu.
\newblock Learning entity and relation embeddings for knowledge graph
  completion.
\newblock In \emph{{AAAI}}, pages 2181--2187, 2015.

\bibitem[Liu et~al.(2017)Liu, Wu, and Yang]{analogy}
Hanxiao Liu, Yuexin Wu, and Yiming Yang.
\newblock Analogical inference for multi-relational embeddings.
\newblock In \emph{{ICML}}, volume~70, pages 2168--2178, 2017.

\bibitem[Mikolov et~al.(2013)Mikolov, Sutskever, Chen, Corrado, and
  Dean]{Mikolov2013DistributedRO}
Tom{\'{a}}s Mikolov, Ilya Sutskever, Kai Chen, Gregory~S. Corrado, and Jeffrey
  Dean.
\newblock Distributed representations of words and phrases and their
  compositionality.
\newblock In \emph{{NeurIPS}}, pages 3111--3119, 2013.

\bibitem[Mohammed et~al.(2018)Mohammed, Shi, and Lin]{QA}
Salman Mohammed, Peng Shi, and Jimmy Lin.
\newblock Strong baselines for simple question answering over knowledge graphs
  with and without neural networks.
\newblock In \emph{{NAACL-HLT} {(2)}}, pages 291--296, 2018.

\bibitem[Nguyen et~al.(2018)Nguyen, Nguyen, Nguyen, and Phung]{convkb}
Dai~Quoc Nguyen, Tu~Dinh Nguyen, Dat~Quoc Nguyen, and Dinh~Q. Phung.
\newblock A novel embedding model for knowledge base completion based on
  convolutional neural network.
\newblock In \emph{{NAACL-HLT} {(2)}}, pages 327--333, 2018.

\bibitem[Nickel et~al.(2011)Nickel, Tresp, and Kriegel]{rescal}
Maximilian Nickel, Volker Tresp, and Hans{-}Peter Kriegel.
\newblock A three-way model for collective learning on multi-relational data.
\newblock In \emph{{ICML}}, pages 809--816, 2011.

\bibitem[Nickel et~al.(2016)Nickel, Rosasco, and Poggio]{hole}
Maximilian Nickel, Lorenzo Rosasco, and Tomaso~A. Poggio.
\newblock Holographic embeddings of knowledge graphs.
\newblock In \emph{{AAAI}}, pages 1955--1961, 2016.

\bibitem[Ren et~al.(2016)Ren, He, Qu, Voss, Ji, and Han]{type}
Xiang Ren, Wenqi He, Meng Qu, Clare~R. Voss, Heng Ji, and Jiawei Han.
\newblock Label noise reduction in entity typing by heterogeneous partial-label
  embedding.
\newblock In \emph{{SIGKDD}}, pages 1825--1834. {ACM}, 2016.

\bibitem[Schlichtkrull et~al.(2018)Schlichtkrull, Kipf, Bloem, van~den Berg,
  Titov, and Welling]{rgcn}
Michael~Sejr Schlichtkrull, Thomas~N. Kipf, Peter Bloem, Rianne van~den Berg,
  Ivan Titov, and Max Welling.
\newblock Modeling relational data with graph convolutional networks.
\newblock In \emph{{ESWC}}, volume 10843, pages 593--607, 2018.

\bibitem[Sun et~al.(2019)Sun, Deng, Nie, and Tang]{sun2018rotate}
Zhiqing Sun, Zhi{-}Hong Deng, Jian{-}Yun Nie, and Jian Tang.
\newblock Rotate: Knowledge graph embedding by relational rotation in complex
  space.
\newblock In \emph{{ICLR}}, 2019.

\bibitem[Tang et~al.(2020)Tang, Huang, Wang, He, and
  Zhou]{tang-etal-2020-orthogonal}
Yun Tang, Jing Huang, Guangtao Wang, Xiaodong He, and Bowen Zhou.
\newblock Orthogonal relation transforms with graph context modeling for
  knowledge graph embedding.
\newblock In \emph{{ACL}}, pages 2713--2722, 2020.

\bibitem[Toutanova and Chen(2015)]{fb237}
Kristina Toutanova and Danqi Chen.
\newblock Observed versus latent features for knowledge base and text
  inference.
\newblock In \emph{Proceedings of the 3rd Workshop on Continuous Vector Space
  Models and their Compositionality}, pages 57--66, Beijing, China, July 2015.

\bibitem[Trouillon et~al.(2016)Trouillon, Welbl, Riedel, Gaussier, and
  Bouchard]{complex}
Th{\'{e}}o Trouillon, Johannes Welbl, Sebastian Riedel, {\'{E}}ric Gaussier,
  and Guillaume Bouchard.
\newblock Complex embeddings for simple link prediction.
\newblock In \emph{{ICML}}, volume~48, pages 2071--2080, 2016.

\bibitem[Wang et~al.(2017)Wang, Xiang, Cheng, and Yuille]{normface}
Feng Wang, Xiang Xiang, Jian Cheng, and Alan~Loddon Yuille.
\newblock Normface: L\({}_{\mbox{2}}\) hypersphere embedding for face
  verification.
\newblock In \emph{{ACM} {MM}}, pages 1041--1049. {ACM}, 2017.

\bibitem[Wang(2016)]{logic}
Hao Wang.
\newblock \emph{From Mathematics to Philosophy (Routledge Revivals)}.
\newblock Routledge, 2016.

\bibitem[Wang et~al.(2021)Wang, Wei, dos Santos, Wang, Nallapati, Arnold,
  Xiang, Yu, and Cruz]{mix}
Shen Wang, Xiaokai Wei, C{\'{\i}}cero~Nogueira dos Santos, Zhiguo Wang, Ramesh
  Nallapati, Andrew~O. Arnold, Bing Xiang, Philip~S. Yu, and Isabel~F. Cruz.
\newblock Mixed-curvature multi-relational graph neural network for knowledge
  graph completion.
\newblock In \emph{{WWW}}, pages 1761--1771, 2021.

\bibitem[Wang et~al.(2014)Wang, Zhang, Feng, and Chen]{transh}
Zhen Wang, Jianwen Zhang, Jianlin Feng, and Zheng Chen.
\newblock Knowledge graph embedding by translating on hyperplanes.
\newblock In \emph{{AAAI}}, pages 1112--1119, 2014.

\bibitem[Xu and Li(2019)]{dihedral}
Canran Xu and Ruijiang Li.
\newblock Relation embedding with dihedral group in knowledge graph.
\newblock In \emph{{ACL} {(1)}}, pages 263--272, 2019.

\bibitem[Xu et~al.(2020)Xu, Nayyeri, Chen, and Lehmann]{geometric}
Chengjin Xu, Mojtaba Nayyeri, Yung{-}Yu Chen, and Jens Lehmann.
\newblock Knowledge graph embeddings in geometric algebras.
\newblock In \emph{{COLING}}, pages 530--544, 2020.

\bibitem[Yang et~al.(2015)Yang, Yih, He, Gao, and Deng]{distmult}
Bishan Yang, Wen{-}tau Yih, Xiaodong He, Jianfeng Gao, and Li~Deng.
\newblock Embedding entities and relations for learning and inference in
  knowledge bases.
\newblock In \emph{{ICLR}}, 2015.

\bibitem[Yang et~al.(2020)Yang, Sha, and Hong]{NagE}
Tong Yang, Long Sha, and Pengyu Hong.
\newblock Nage: Non-abelian group embedding for knowledge graphs.
\newblock In \emph{{CIKM}}, pages 1735--1742, 2020.

\bibitem[Yao et~al.(2019)Yao, Mao, and Luo]{kgbert}
Liang Yao, Chengsheng Mao, and Yuan Luo.
\newblock {KG-BERT:} {BERT} for knowledge graph completion, 2019.

\bibitem[Zhang et~al.(2016)Zhang, Yuan, Lian, Xie, and Ma]{recommend}
Fuzheng Zhang, Nicholas~Jing Yuan, Defu Lian, Xing Xie, and Wei{-}Ying Ma.
\newblock Collaborative knowledge base embedding for recommender systems.
\newblock In \emph{{KDD}}, pages 353--362, 2016.

\bibitem[Zhang et~al.(2019)Zhang, Tay, Yao, and Liu]{quate}
Shuai Zhang, Yi~Tay, Lina Yao, and Qi~Liu.
\newblock Quaternion knowledge graph embeddings.
\newblock In \emph{{NeurIPS}}, pages 2731--2741, 2019.

\bibitem[Zhang et~al.(2020{\natexlab{a}})Zhang, Cai, and Wang]{dura}
Zhanqiu Zhang, Jianyu Cai, and Jie Wang.
\newblock Duality-induced regularizer for tensor factorization based knowledge
  graph completion.
\newblock In \emph{{NeurIPS}}, 2020{\natexlab{a}}.

\bibitem[Zhang et~al.(2020{\natexlab{b}})Zhang, Zhuang, Zhu, Shi, Xiong, and
  He]{rghat}
Zhao Zhang, Fuzhen Zhuang, Hengshu Zhu, Zhi{-}Ping Shi, Hui Xiong, and Qing He.
\newblock Relational graph neural network with hierarchical attention for
  knowledge graph completion.
\newblock In \emph{{AAAI}}, pages 9612--9619, 2020{\natexlab{b}}.

\end{thebibliography}

\appendix
\newpage

\section*{Appendix}
% TODO: discuss why only transform the head entity and why only give head to find tail.

\section{Proofs}
% \subsubsection{Proof of Proposition \ref{prop:one matrix}}
% \label{app:proof:one matrix}
% If the model is bilinaer based, then we have:
% \begin{align}
%     (R_hh)^T(R_tt) = h^TR_h^TR_tt = h^TR't,
% \end{align}
% where $R' = R_h^TR_t$ and is same to the original bilinear based model, so bilinear model needs only one linear transformation.

% For a learned model, if the embedding of the tail entity is applied with a linear transformation, then the corresponding relation specific matrices are either singular or non-singular.

\subsection{UniBi Is Bounded}
\begin{proposition}
\label{prop:bound}
UniBi is bounded that $s(h,r,t) \in \left[-1, 1\right]$.
\end{proposition}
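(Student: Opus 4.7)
The plan is to prove $|s(h,r,t)| \leq 1$ via Cauchy–Schwarz combined with the SVD-like structure of $\mathbf{R}$. The crucial observation is that the normalization by $\sigma_{max}$ (equivalently $|\xi_{max}|$) in the efficient form makes the operator norm of the effective relation matrix exactly one; together with the unit-norm entity constraint, the bilinear product must lie in $[-1,1]$.

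First I would work with the efficient score function in Eq. \ref{equ:final}. I introduce the change of variables $\mathbf{u} = \mathbf{R}_h^\top \mathbf{h}$ and $\mathbf{v} = \mathbf{R}_t \mathbf{t}$. Since $\mathbf{R}_h$ and $\mathbf{R}_t$ are orthogonal, they are isometries, so $\|\mathbf{u}\| = \|\mathbf{h}\|$ and $\|\mathbf{v}\| = \|\mathbf{t}\|$. The numerator then reduces to a diagonal bilinear form $\mathbf{u}^\top \mathbf{\Xi} \mathbf{v} = \sum_{i} \xi_i u_i v_i$, which is much easier to control directly.

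Next I would bound this scalar sum. Applying the triangle inequality, factoring out the largest magnitude, and finally invoking Cauchy–Schwarz gives
$$\Bigl|\sum_i \xi_i u_i v_i\Bigr| \;\leq\; \sum_i |\xi_i|\,|u_i v_i| \;\leq\; |\xi_{max}| \sum_i |u_i|\,|v_i| \;\leq\; |\xi_{max}|\,\|\mathbf{u}\|\,\|\mathbf{v}\|.$$
Substituting back and using the norm identities yields $|s(h,r,t)| \leq |\xi_{max}|\,\|\mathbf{u}\|\|\mathbf{v}\| / (|\xi_{max}|\,\|\mathbf{h}\|\|\mathbf{t}\|) = 1$, which is exactly the claim.

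The main subtlety — and the only obstacle worth flagging — is the distinction between spectral radius $\rho(\mathbf{R})$ and spectral norm $\|\mathbf{R}\|_2$, which in general are not equal. The argument above really uses the spectral norm being one. Fortunately, under the UniBi parameterization $\mathbf{R} = \mathbf{R}_h \mathbf{\Xi} \mathbf{R}_t$ with $\mathbf{R}_h,\mathbf{R}_t$ orthogonal, the decomposition is an SVD, so the singular values are $\{|\xi_i|\}$ and dividing by $|\xi_{max}|$ simultaneously forces both $\rho(\mathbf{R}) = 1$ and $\|\mathbf{R}\|_2 = 1$. Thus the constraint imposed in Eq. \ref{equ:unibi:constrain} coincides with the spectral-norm bound needed here, and the proof goes through without extra work.
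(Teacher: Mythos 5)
Your proof is correct and follows essentially the same route as the paper's: both reduce to the bound $|\mathbf{h}^\top\mathbf{R}\mathbf{t}| \leq \sigma_{max}(\mathbf{R})\,\|\mathbf{h}\|\,\|\mathbf{t}\|$ via Cauchy--Schwarz, except that you derive it explicitly in the singular basis of the efficient parameterization while the paper invokes the operator-norm inequality $\|\mathbf{h}^\top\mathbf{R}\| \leq \rho(\mathbf{R})\|\mathbf{h}\|$ directly on the constrained form of Eq.~\ref{equ:unibi:constrain}. Your side remark distinguishing the spectral radius from the spectral norm is well taken --- the paper's $\rho(\cdot)$ is in fact used to mean the maximum singular value --- but it does not alter the argument.
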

\begin{proof}
By the Cauchy-Schwarz inequality and the fact that for a vector L2 norm is equivalent to spectral norm, we have the following.
\begin{equation}
    \|\mathbf{h^\top Rt}\| \leq \|\mathbf{h^\top R}\|\|\mathbf{t}\| \leq \rho(\mathbf{R})\|\mathbf{h}\|\|\mathbf{t}\|=1.
\end{equation}
\end{proof}

\subsection{Proof of Theorem \ref{theorem: can model}}
\label{proof:unibi can model}
\begin{proof}

On the one hand, if $\mathbf{R} = \mathbf{I}$, it is easy to have $\forall \mathbf{h,t}\in\mathbb{\hat{E}}, \mathbf{h} \neq \mathbf{t}$ that $\mathbf{h^\top I h} > \mathbf{h^\top I t}$ from the property of cosine. 

On the other hand, $\forall \mathbf{R}\in\mathbb{\hat{R}}, \mathbf{R} \neq \mathbf{I}$, we can always give a counterexample. Using singular value decomposition (SVD), we have $\mathbf{R} = \mathbf{U}\mathbf{\Sigma}\mathbf{V^\top}$, where $\Sigma = \text{Diag}[\sigma_1,\dots, \sigma_n]$ with $\sigma_i\geq0$ and $U,V$ are orthogonal matrices. Since $\rho(\mathbf{R}) = 1$, we have $\sigma_{max} = \max(\sigma_i)$ = 1.

Besides, we notice that since $U,V$ are orthogonal matrices that do not change the norm of vectors, we have $\|\mathbf{h^\top U}\| = \|\mathbf{V^\top t}\| = 1$ and use $\mathbf{\hat{h}}$ and $\mathbf{\hat{t}}$ to denote $\mathbf{U^\top h}$ and $\mathbf{V^\top t}$ for mathematical simplicity. We consider three scenarios and discuss them separately.

(1) If all singular values of $\mathbf{R}$ are equivalent, we have $\mathbf{\Sigma} = \mathbf{I}$, and we have:

\begin{equation}
    s(h,r,t) = \mathbf{h^\top U \Sigma V ^\top t} = \mathbf{\hat{h}^\top I\hat{t}} = \mathbf{\hat{h}^\top\hat{t}}.
\end{equation}

It is easy to notice that the above equation just goes back to the cosine function, and it has the maximum value when $\mathbf{\hat{h}} = \mathbf{\hat{t}}$ and $\mathbf{U^\top h} = \mathbf{V^\top t}$. If $\mathbf{U V^\top} = \mathbf{I}$, this contradicts the assumption that $\mathbf{R}\neq\mathbf{I}$. If $\mathbf{U V^\top} \neq \mathbf{I}$, then we have $\mathbf{h}\neq \mathbf{t}$, since $\mathbf{h} = \mathbf{(U^\top)}^{-1}\mathbf{V^\top t} = \mathbf{UV^\top t}$. It means $\mathbf{h^\top R h} < \mathbf{h^\top R t}$ in this situation.

% 不是h，而是h,hat. 这个等下要重新修改下
(2) If not all singular values of $\mathbf{R}$ are equivalent and $\mathbf{U} = \mathbf{V}$, there $\exists i,j\in 1,\dots,n$ that $\sigma_i \neq \sigma_j$. It may be assumed that $\sigma_i > \sigma_j$. Then we take $\forall \mathbf{h} \in \mathbb{\hat{E}}$ that has $\mathbf{\hat{h}}_j = (\frac{\sigma_i}{\sigma_j} - \epsilon)\mathbf{\hat{h}}_i$ where $\epsilon \in (0, \frac{\sigma_i}{\sigma_j} - 1)$. Then we take 
% Since $\|\mathbf{\hat{h}}\|=1$, we have $\mathbf{\hat{h}}_i^2 + \mathbf{\hat{h}}_j^2 = 2 \xi^2 \leq 1$. Then we take $\mathbf{\hat{t}} \in \mathbb{\hat{E}}$ which

\begin{equation}
    \mathbf{\hat{t}}_k = \begin{cases}\mathbf{\hat{h}}_k,\ &k\neq i,j, \\
    \mathbf{\hat{h}}_j,\ & k=i, \\
    \mathbf{\hat{h}}_t,\ &k=j.\end{cases}
\end{equation}
It is easy to notice that $\mathbf{\hat{h}} = \mathbf{U^\top h}, \mathbf{\hat{t}} = \mathbf{U^\top t}$ and $\mathbf{h} \neq \mathbf{t}$, then we have
\begin{equation}
\label{equ:proof theroem h r t expanded}
\begin{aligned}
    \mathbf{h^\top R t} & = \mathbf{h^\top U R U^\top t} \\
    & = \mathbf{\hat{h}^\top\Sigma\hat{t}} \\
    & = \sigma_i\mathbf{\hat{h}}_i\mathbf{\hat{t}}_i + \sigma_j\mathbf{\hat{h}}_j\mathbf{\hat{t}}_j + \sum_{k\neq i,j}\sigma_k\mathbf{\hat{h}}_k\mathbf{\hat{t}}_k \\
    & = \sigma_i\mathbf{\hat{h}}_i\mathbf{\hat{h}}_j + \sigma_j\mathbf{\hat{h}}_j\mathbf{\hat{h}}_i+  \sum_{k\neq i,j}\sigma_k\mathbf{\hat{h}}_k^2,
\end{aligned}
\end{equation}
similarly, we have
\begin{equation}
\label{equ:proof theroem h r h expanded}
    \begin{aligned}
        \mathbf{h^\top R h} & = \mathbf{h^\top U R U^\top h} \\
    & = \mathbf{\hat{h}^\top\Sigma\hat{h}} \\
    & = \sigma_i\mathbf{\hat{h}}_i^2 + \sigma_j\mathbf{\hat{h}}_j^2 + \sum_{k\neq i,j}\sigma_k\mathbf{\hat{h}}_k^2.
    % & = \xi^2\sigma_i + \xi^2\sigma_j + \sum_{k\neq i,j}\sigma_k\mathbf{\hat{h}}_k^2.
    \end{aligned}
\end{equation}
Then, we use Eq. \ref{equ:proof theroem h r h expanded} minus Eq. \ref{equ:proof theroem h r t expanded}, and we have
\begin{equation}
    \begin{aligned}
        &\mathbf{h^\top R h} - \mathbf{h^\top R t} \\ 
         = & \left(\sigma_i\mathbf{\hat{h}}_i^2 + \sigma_j\mathbf{\hat{h}}_j^2 + \sum_{k\neq i,j}\sigma_k\mathbf{\hat{h}}_k^2\right) - \left(\sigma_i\mathbf{\hat{h}}_i\mathbf{\hat{h}}_j + \sigma_j\mathbf{\hat{h}}_j\mathbf{\hat{h}}_i +  \sum_{k\neq i,j}\sigma_k\mathbf{\hat{h}}_k^2 \right) \\
         = & \sigma_i(\mathbf{\hat{h}}_i^2 - \mathbf{\hat{h}}_i\mathbf{\hat{h}}_j) + \sigma_j(\mathbf{\hat{h}}_j^2 - \mathbf{\hat{h}}_i\mathbf{\hat{h}}_j) \\
         = & \sigma_i\mathbf{\hat{h}}_i(\mathbf{\hat{h}}_i - \mathbf{\hat{h}}_j) - \sigma_j\mathbf{\hat{h}}_j(\mathbf{\hat{h}}_i - \mathbf{\hat{h}}_j) \\
         = & (\sigma_i\mathbf{\hat{h}}_i - \sigma_j\mathbf{\hat{h}}_j)(\mathbf{\hat{h}}_i - \mathbf{\hat{h}}_j) \\
         = & \left(\sigma_i\mathbf{\hat{h}}_i - \sigma_j (\frac{\sigma_i}{\sigma_j} - \epsilon)\mathbf{\hat{h}}_i\right)\left(\mathbf{\hat{h}}_i -  (\frac{\sigma_i}{\sigma_j} - \epsilon)\mathbf{\hat{h}}_i\right) \\
         = & \left(\sigma_i\mathbf{\hat{h}}_i - \sigma_i\mathbf{\hat{h}}_i + \epsilon\sigma_j\mathbf{\hat{h}}_i\right)\left(1-\frac{\sigma_i}{\sigma_j} + \epsilon \right)\mathbf{\hat{h}}_i \\
         = & \epsilon\sigma_j\mathbf{\hat{h}}_i^2(1-\frac{\sigma_i}{\sigma_j} + \epsilon) \\
         < & 0,
    \end{aligned}
\end{equation}
which means that $\mathbf{h^\top Rh} < \mathbf{h^\top Rt}$ in this case.

(3) If not all singular values of $\mathbf{R}$ are equivalent and $\mathbf{U} \neq \mathbf{V}$, there exists $k \in 1,\dots,n$ such that $\sigma_k = \sigma_{max} =  1$ since $\rho(\mathbf{R}) = 1$. Then we take the following.
\begin{equation}
    \mathbf{\hat{h}}_i = \mathbf{\hat{t}}_i = \begin{cases} 1,\ & i = k, \\ 0,\ &i\neq k.\end{cases}
\end{equation}
It should be noted that $\mathbf{\hat{h}} = \mathbf{\hat{t}}$ and $\mathbf{h} \neq \mathbf{t}$ since $\mathbf{U} \neq \mathbf{V}$. Then, we have
\begin{equation}
    \begin{aligned}
    \mathbf{h^\top Rt} & = \mathbf{h^\top U\Sigma V^\top t}\\
    & = \mathbf{\hat{h}^\top\Sigma \hat{t}} \\
    & = \sigma_k \\
    & = 1.
    \end{aligned}
\end{equation}
Since Proposition \ref{prop:bound} proves that UniBi is bounded by $[-1,1]$, we have $\mathbf{h^\top R h} \leq 1 = \mathbf{h^\top R t}$, which means that $s(h,r,h) > s(h,r,t)$ does not always hold.

In summary, we can conclude that UniBi has $\mathbf{h}\neq\mathbf{t}, \  \mathbf{h^\top R h} > \mathbf{h^\top R t}$ iff. $\mathbf{R} = \mathbf{I}$, which means that UniBi can model \textit{identity} uniquely. And considering that the embedding of entities are differences, we conclude that UniBi can model the law of identity in terms of definition~\ref{def:identity}

\end{proof}
\subsection{Proof of Theorem \ref{prop:identity constrain}}
\label{proof:necessary constraints}
\begin{proof}
if we have $\|\mathbf{e}\|= 1, \rho(\mathbf{R}) = 1$. The equation for the entity part obviously holds, and by Proposition \ref{prop:bound}, we have the following.
\begin{equation}
    \|\mathbf{Re}\| \leq \rho(\mathbf{R})\|\mathbf{e}\| = \rho(\mathbf{R})=1,
\end{equation}
similarly, we have $\|\mathbf{e^\top R}\| \leq 1$.

Moreover, such a condition will become necessary and sufficient if $\forall \mathbf{R}\in \mathbb{\hat{R}}$, $\exists \hat{e}$ that either $\|\mathbf{\hat{e}^\top R}\| = 1$ or $\|\mathbf{R\hat{e}}\| = 1$.

To prove it, if we have $\|\mathbf{e^\top R}\| \leq 1$ and $\|\mathbf{Rt}\| \leq 1$, we use SVD to any $\mathbf{R}$ and get $\mathbf{R} = \mathbf{U\Sigma V}$. Then we denote $\mathbf{\sigma} = Diag(\mathbf{\Sigma})$ and we have $\forall \mathbf{e}, \|\mathbf{\sigma \cdot e}\| \leq 1$.

% If $\forall i, \sigma_i > 1$, it is easy to see that $\|\mathbf{e\cdot \sigma}\|$ will larger than 1.

If $\exists i, \sigma_i > 1$, we take $i \neq j, e_i = 1, e_j = 0$ to show that $\|\mathbf{e\cdot \sigma}\|$ is larger than 1. Therefore, we have $\forall i, \sigma_i \leq 1$. Moreover, if $\forall i, 0 < \sigma_i < 1$, we take $\mathbf{\bar{e}} = \mathbf{V\hat{e}}$
\begin{align}
    \|\mathbf{\sigma \cdot \bar{e}}\|^2 = \sum_i{\sigma_i^2\bar{e}_i^2} < \sigma_i \bar{e}_i^2 = 1,
\end{align}
which also violates $\|\mathbf{Re}\| = \|\mathbf{\Sigma V\hat{e}}\| = \|\mathbf{\sigma \cdot \bar{e}}\| = 1$. Therefore, there $\exists k, \sigma_k = 1$ and we have $\rho(\mathbf{R}) = 1$.

\end{proof}
\subsection{Proof of Proposition \ref{prop:not invertible}}
\label{app:proof:not invertible}

\begin{proof}

If a relation $r$ is invertible, it means there exists an inverse relation $r^{-1}$ that ensure $r \circ r^{-1} = r^{-1} \circ r = identity$, where $\circ$ is the composition of relations. Consider that \textit{identity} relation means an entity is itself, thus KGs only contain triples like $(e_1,identity,e_1)$ rather than $(e_1, identity, e_2)$, $\forall e_1, e_2\in\mathcal{E}$ and $e_1\neq e_2$.

Consider a complex relation $r$ that has multiple tail entities for a head entity, that is, $\exists h,t_1,t_2, t_1\neq t_2\ (h,r,t_1),(h,r,t_2)$. In order to remap the entities $t_1$ and $t_2$ back to $h$, the inverse relation $r^{-1}$ has to contain triples that $(t_1,r^{-1},h),(t_2,r^{-1},h)$. Although $r\circ r^{-1}$ could map $h$ to $h$, the other order $r^{-1}\circ r$ fails to map $t_1,t_2$ back to themselves separately since both $(t_1, r^{-1} \circ r, t_1)$ and $(t_1, r^{-1}\circ r, t_2)$ are true. Therefore $r$ is not invertible. Similarly, if a relation has multiple heads, we can also obtain the above conclusion.

Therefore, all complex relations are inherently non-invertible.

\end{proof}

\section{Background of Group and Monoid}
\label{app:background of group and monoid}
\begin{definition}[\cite{jacobson2012basic}]
\label{def:monoid}
A monoid is a triple $(M, p, 1)$ in which $M$ is a non-vacuous set, $p$ is an associative binary composition (or product) in $M$, and 1 is an element of $M$ such that $p(1,a) = a= p(a,1)$ for all $ a \in M$
\end{definition}

\begin{definition}[\cite{jacobson2012basic}]
\label{def:group}
A group $G$ ( or (G,p,1)) is a monoid all of whose elements are invertible.
\end{definition}

\section{Discuss of Distance Based models}
\label{app:proof:distance based models}
% \begin{definition}
% A distance based model.
% \end{definition}
We mention that distance based model could model \textit{identity} uniquely, and here we give its corresponding proof. Here, we only consider models that can be written in the basic form of $s(h,r,t) = -\|\mathbf{Rh}-\mathbf{t}\|$. Note that translation is also considered in such a form if we take translation as a linear transformation. Furthermore, the model must ensure that $\mathbf{I} \in \mathbb{\hat{R}}$ and $\mathbb{\hat{E}} = \mathbb{R}^n$. Other peculiar scenarios are not considered in this discussion. 
\begin{theorem}
A distance based model $s(h,r,t) = -\|\mathbf{Rh}-\mathbf{t}\|$ with $\mathbf{I} \in \mathbb{\hat{R}}$ and $\mathbb{\hat{E}} = \mathbb{R}^n$ can model \textit{identity} uniquely. 
\end{theorem}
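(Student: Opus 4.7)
The plan is to split the argument into an existence part and a uniqueness part, matching the structure of Definition~\ref{def:identity} with the score function $s(h,r,t) = -\|\mathbf{R}\mathbf{h} - \mathbf{t}\|$.

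For existence, I would simply check that $\mathbf{R} = \mathbf{I}$ works. Since $\mathbf{I} \in \hat{\mathbb{R}}$ by assumption, and for any $\mathbf{h}, \mathbf{t} \in \mathbb{R}^n$ with $\mathbf{h} \neq \mathbf{t}$, one has $s(h,r,h) = -\|\mathbf{h}-\mathbf{h}\| = 0$ while $s(h,r,t) = -\|\mathbf{h}-\mathbf{t}\| < 0$. This gives the desired strict inequality.

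For uniqueness, I would argue contrapositively: suppose $\mathbf{R} \in \hat{\mathbb{R}}$ with $\mathbf{R} \neq \mathbf{I}$; I want to produce a counterexample $\mathbf{h} \neq \mathbf{t}$ with $s(h,r,h) \le s(h,r,t)$. The key observation is that $\mathbf{R} \neq \mathbf{I}$ implies there exists some $\mathbf{h} \in \mathbb{R}^n$ with $\mathbf{R}\mathbf{h} \neq \mathbf{h}$ (otherwise $\mathbf{R}$ would agree with $\mathbf{I}$ on all of $\mathbb{R}^n$). Setting $\mathbf{t} := \mathbf{R}\mathbf{h}$, we have $\mathbf{t} \neq \mathbf{h}$ and both vectors lie in $\hat{\mathbb{E}} = \mathbb{R}^n$. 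Then $s(h,r,t) = -\|\mathbf{R}\mathbf{h} - \mathbf{R}\mathbf{h}\| = 0$, whereas $s(h,r,h) = -\|\mathbf{R}\mathbf{h} - \mathbf{h}\| < 0$, contradicting the requirement $s(h,r,h) > s(h,r,t)$.

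There is no real obstacle here; the mild subtlety is just ensuring that the chosen $\mathbf{t} = \mathbf{R}\mathbf{h}$ is a valid entity embedding, which is guaranteed by $\hat{\mathbb{E}} = \mathbb{R}^n$, and that picking $\mathbf{h}$ with $\mathbf{R}\mathbf{h} \neq \mathbf{h}$ is possible whenever $\mathbf{R} \neq \mathbf{I}$. Combining both parts yields that $\mathbf{R} = \mathbf{I}$ is the unique element of $\hat{\mathbb{R}}$ satisfying Definition~\ref{def:identity}, so the distance-based model captures the law of identity.
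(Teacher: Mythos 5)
Your proof is correct and follows the same core idea as the paper's: verify $\mathbf{R}=\mathbf{I}$ directly, and for $\mathbf{R}\neq\mathbf{I}$ build a counterexample by sending $\mathbf{t}=\mathbf{R}\mathbf{h}$ so that $s(h,r,t)=0 \ge s(h,r,h)$. Your execution is actually tighter than the paper's in two respects. First, you explicitly justify that some $\mathbf{h}$ with $\mathbf{R}\mathbf{h}\neq\mathbf{h}$ exists (so that $\mathbf{t}\neq\mathbf{h}$ and the counterexample is non-vacuous); the paper takes $\mathbf{t}=\mathbf{R}\mathbf{h}$ for an unspecified $\mathbf{h}$ without ruling out $\mathbf{R}\mathbf{h}=\mathbf{h}$. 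Second, your argument is uniform in $\mathbf{R}$, whereas the paper splits into non-singular and singular cases; the singular case is unnecessary under your choice of $\mathbf{h}$, and the paper's treatment of it even slips into the bilinear score $\mathbf{h}^\top\mathbf{R}\mathbf{t}$ rather than the distance score, which is out of place in this theorem. So your version is both correct and a cleaner route to the same conclusion.
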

\begin{proof}
On the one hand, if $\mathbf{R} = \mathbf{I}$, it is easy to have
\begin{equation}
    -\|\mathbf{Rh} - \mathbf{t}\| = -\|\mathbf{h} - \mathbf{t}\| \leq -\|\mathbf{h} - \mathbf{h}\| = 0,
\end{equation}
where $\mathbf{h}\neq \mathbf{t}$.

On the other hand, if $\mathbf{R} \neq \mathbf{I}$ and $\mathbf{R}$ is not a singular matrix, we take $\mathbf{t} = \mathbf{Rh}$ and have
\begin{equation}
    -\|\mathbf{Rh} - \mathbf{h}\| \leq 0 = -\|\mathbf{Rh} - \mathbf{Rh}\| = -\|\mathbf{Rh} -\mathbf{t}\|.
\end{equation}

If if $\mathbf{R} \neq \mathbf{I}$ and $\mathbf{R}$ is a singular matrix, then $\mathbf{Rx} = 0$ has solutions that $\mathbf{x} \neq \mathbf{0}$, and by taking $\mathbf{t} = \mathbf{x} + \mathbf{h}$ we have:
\begin{equation}
    \mathbf{h^\top Rt} - \mathbf{h^\top Rh} = \mathbf{h^\top R(t - h)} = \mathbf{h^\top R x} = 0,
\end{equation}
which means $s(h,r,h) = s(h,r,t)$.

Therefore, we prove that distance-based models are born to uniquely model \textit{identity}.
\end{proof}

\section{Comparison to previous restrictions}
\label{app: comparison to previous restrictions}
Although it is true that entity normalization and set $\mathbf{R}$ to orthogonal matrices, which is a special case of $\rho(\mathbf{R}) = 1$, are common in KGE models, our constraint differs from their approach as follows. 

(1) The constraint for relation based on the spectral radius, i.e., $\rho(\mathbf{R}) = 1$, is first proposed in our work, which is indispensable and irreplaceable to model the \textit{identity} uniquely without the cost of the performance. If we set $\mathbf{R}$ to be orthogonal while keeping the normalization of entities, the performance will drop significantly, e.g., MRR: 0.488 $\to$ 0.471 for UniBi-O(2) on WN18RR. Since orthogonal matrix requires each singular value to be $1$, which means $n$ equality constraints and the cost on expressiveness is no longer negligible.

(2) Only the combination of entity and relation constraints succeeds to model \textit{identity} uniquely, as shown in Figure \ref{fig:both regular are indispensable}. It suggests that constraints on entity and relation should be treated as a whole rather than a combination of two unrelated things.

(3) At first glance, our constraint looks very similar to that of TransR~\citep{transr}, but in fact there is a big difference. The constraint of TransR is $\|\mathbf{h}\| \leq 1$, $\|\mathbf{t}\| \leq 1$, $\|\mathbf{hM}_r\|\leq 1$, and $\|\mathbf{tM}_r\|\leq 1$, and has three differences from ours. 
\begin{enumerate}[i)]
    \item For TransR, $\|\mathbf{hM}_r\|\leq 1$ and $\|\mathbf{tM}_r\| \leq 1$ is the constraint \textbf{itself}. In contrast, $\|\mathbf{R^\top h}\|\leq 1$ and $\|\mathbf{Rt}\| \leq 1$ is \textbf{deduction} of our constraints $\|\mathbf{e}\| = 1$ and $\rho(\mathbf{R}) = 1$.
    \item TransR does not normalize the entities, and we have shown in Fig. \ref{fig:case 1} that normalization is necessary for a bilinear based method to uniquely model \textit{identity}.
    \item TransR is a distance based model, and as we have shown in Appendix \ref{app:proof:distance based models}, distance based models do not need to consider the problem of \textit{identity}, thus the proposal of TransR is difference to us. 
\end{enumerate}

Therefore, we believe that our combination of constraints is novel, since it proposes a new constraint for the relation and the two parts are deliberately rather than arbitrarily combined.

\begin{figure}[t]
    \centering
    \subfigure[RESCAL w/o reg.]{
        \includegraphics[width=4cm]{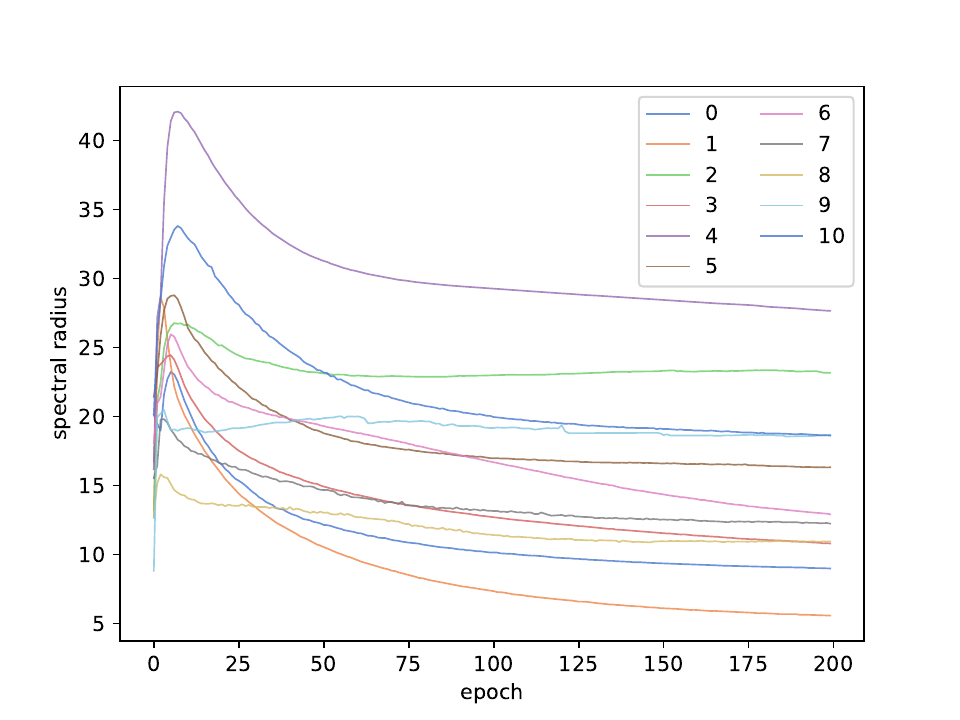}
        \label{fig:spectral:rescal}
    }
    % \quad\quad
    \subfigure[RESCAL w/ Frobenius]{
        \includegraphics[width=4cm]{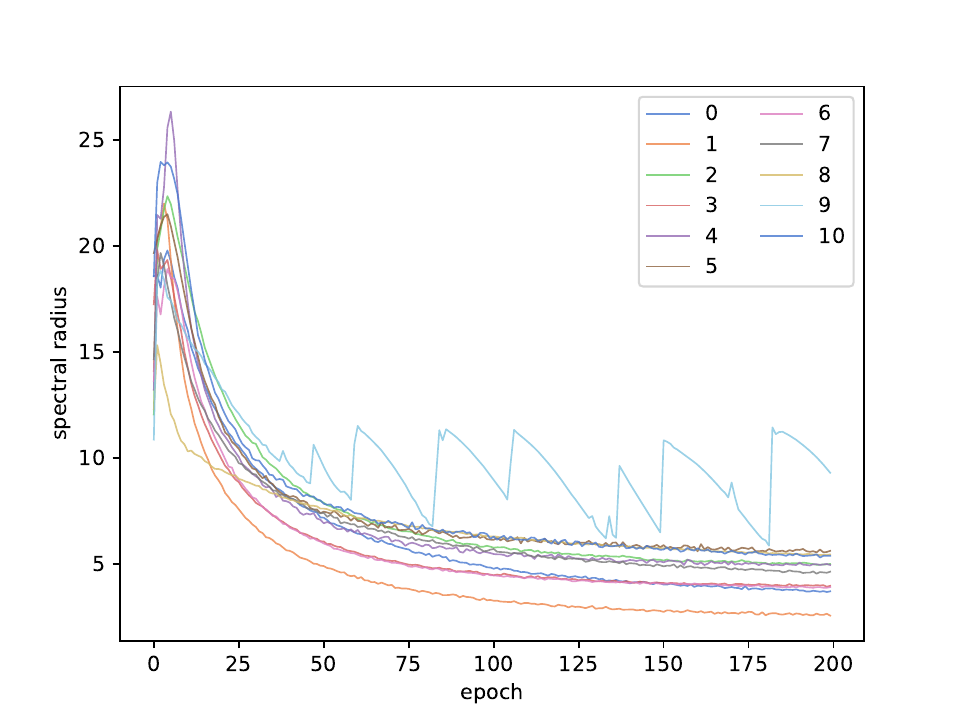}
        \label{fig:spectral:rescal fro}
    }
    % \quad\quad
    \subfigure[RESCAL w/ DURA]{
        \includegraphics[width=4cm]{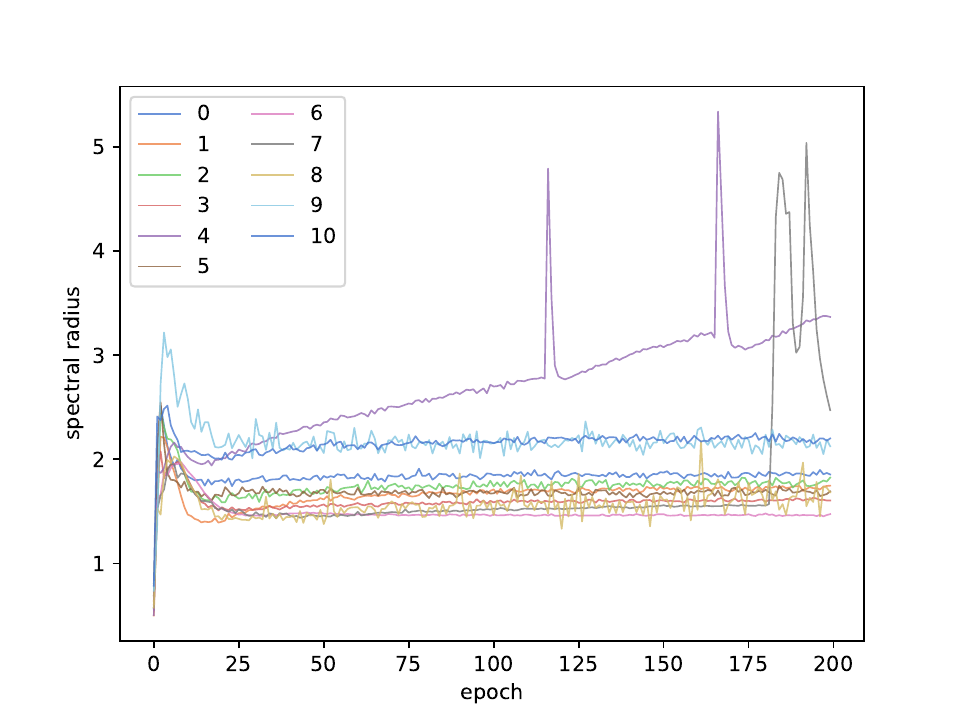}
        \label{fig:spectral:rescal dura}
    }
    \caption{Why learning on the scales are ineffective. We test RESCAL with 3 setting a) no regularization, b) use Frobenius norm as regularization, c) use DURA as regularization. We use index rather than name to denote different relations for simplicity. And we notice that 1) non-convergence exists in every case, 2) the better the result, the more the scales converge, 3) Regularization cannot stop the fluctuation of scales. (Better view in color, zoom in, note the difference in the vertical coordinates.)}
    \label{fig:spectral:rescal-all}
\end{figure}

\section{Ineffective learning}
\label{app:ineffective learning}

Here, we demonstrate that ineffective learning does exist, which means the scale is not only redundant but also harmful. As shown in Fig. \ref{fig:spectral:rescal-all}, we take RESCAL as example to show this phenomenon. And we notice that 1) non-convergence exists in every case, 2) the better the result, the more the scales converge, and 3) regularization cannot stop the fluctuation of scales.

We believe that these cases illustrate, on the one hand, the positive correlation between the constraint scale and the effect, on the other hand, the mere constraint cannot eliminate fluctuations that may interfere with the model learning. Therefore, we think scale is harmful and learning on it is ineffective, and we need a hard constraint rather than regularization term to prevent this completely. 

\section{From Group to Monoid}
\label{app:monoid}
Since the singular values are less than or equal to 1, some matrices do not have their corresponding inverse elements in $\mathbb{\hat{R}}$. Therefore, UniBi violates the definition of group, which requires that each element has its inverse element inside. However, we argue that some relations are inherently non-invertible and misidentified by previous work~\citep{NagE,sun2018rotate, dihedral}.
% For instance, we take a toy example in Fig. \ref{fig:non-invertible}.

For example, in Fig. \ref{fig:non-invertible},
if two relations are inverse elements of each other, their composition must be \textit{identity}. However, we find the combination of \textit{IsParentOf} and \textit{IsChildOf} is not the \textit{identity}, since the child of Alice's parent is "Alice or David" rather than "Alice itself" as demonstrated in Fig. \ref{fig:true case} and Fig. \ref{fig:false case}.

It should be noted that we are not saying that these two relations are irrelevant, but rather pointing out that their combination does not strictly fit our definition of \textit{identity}. Moreover, the fact that they are both \textbf{non-invertible} does not mean that they cannot form the \textbf{inversion pattern}~\citep{sun2018rotate}, since definitions of the two concepts are different and do not conflict.

Beyond the specific example, we generalize it to all complex relations that appear in two difference triples $(e_1, r, e_2)$ and $(e_1, r, e_3)$ (or ($e_3, r, e_2$)), where $e_1, e_2, e_3 \in \mathcal{E}$, and $e_1 \neq e_2 \neq e_3$. Furthermore, we have the following proposition.

\begin{proposition}
\label{prop:not invertible}
$\forall r \in \mathcal{R}$, if $r$ is a complex relation, then $r$ is non-invertible.
\end{proposition}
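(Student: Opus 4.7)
The plan is to argue by contradiction: assume $r$ is simultaneously complex and invertible, and produce a pair of entities whose image under $r^{-1} \circ r$ contradicts the identity relation. Since ``complex'' means not 1-1, either there is a head with multiple distinct tails or a tail with multiple distinct heads; by swapping the roles of head and tail the two cases are symmetric, so I would fix the 1-to-N case and take witnesses $h, t_1, t_2 \in \mathcal{E}$ with $t_1 \neq t_2$ such that both $(h,r,t_1)$ and $(h,r,t_2)$ lie in the KG.

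First I would unpack invertibility in the relational sense implicit in Appendix \ref{app:monoid}: $r^{-1}$ is required to be a binary relation on $\mathcal{E}$ such that $r \circ r^{-1} = r^{-1} \circ r$ equals the identity relation, whose triples are exactly the diagonal $\{(e, \text{identity}, e) : e \in \mathcal{E}\}$. The key preliminary observation is that $r^{-1}$ must include $(t, r^{-1}, h)$ whenever $(h, r, t) \in r$, for otherwise the composition $r \circ r^{-1}$ could not return $h$ to $h$ through the witness $t$. Applied to our two witnesses, this forces both $(t_1, r^{-1}, h)$ and $(t_2, r^{-1}, h)$ to belong to $r^{-1}$.

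Then I would compose in the opposite order to extract the contradiction. Starting from $t_1$, traversing $(t_1, r^{-1}, h)$ and then $(h, r, t_2)$ exhibits $(t_1, r^{-1} \circ r, t_2)$ in the composed relation. But identity only contains the diagonal pair $(t_1, \text{identity}, t_1)$, and since $t_1 \neq t_2$ this forces $r^{-1} \circ r \neq \text{identity}$, contradicting invertibility. The N-to-1 case yields the same contradiction by the symmetric argument on $r \circ r^{-1}$.

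The main obstacle I foresee is conceptual rather than computational: pinning down the right notion of ``invertible'' for a multi-valued relation. The statement implicitly uses composition of binary relations (as in the monoid of endorelations on $\mathcal{E}$) rather than a pointwise functional inverse, and one must be careful that $r^{-1}$ is required to lie in the same relational universe as $r$. Once this framing is fixed, the argument reduces to a one-line pigeonhole observation: two tails sharing a head cannot both be routed back to themselves by any single return relation.
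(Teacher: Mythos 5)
Your argument is correct and follows essentially the same route as the paper's own proof: both fix the 1-N case via witnesses $(h,r,t_1)$, $(h,r,t_2)$ with $t_1\neq t_2$, force $(t_1,r^{-1},h)$ and $(t_2,r^{-1},h)$ into $r^{-1}$, and derive the off-diagonal pair $(t_1, r^{-1}\circ r, t_2)$ contradicting $r^{-1}\circ r = \textit{identity}$, handling the N-1 case by symmetry. Your explicit framing of invertibility in the monoid of binary relations on $\mathcal{E}$ is a welcome clarification that the paper leaves implicit, but the substance of the proof is the same.
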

\begin{proof}
Please refer to Appendix \ref{app:proof:not invertible}.
\end{proof}

Since these non-invertible relations extensively present in KGs, the requirement that each element has its inverse element in the set should be abandoned. Thus, rather than group, we think monoid is a better structure for relations by their definitions\footnote{Definition \ref{def:monoid} and Definition \ref{def:group}.}.

\begin{figure}[t]
    \centering
    \subfigure[]{
        \includegraphics[width=4cm]{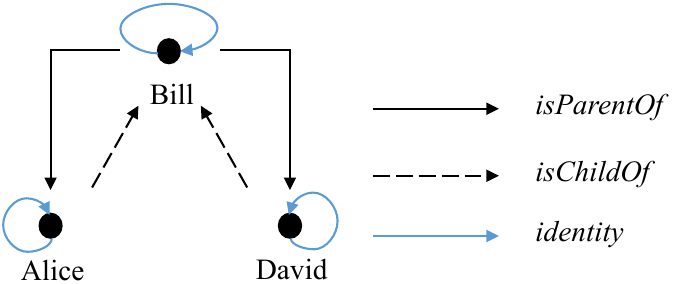}
        \label{fig:non-invertible}
    }
    % \quad\quad
    \subfigure[]{
        \includegraphics[width=4cm]{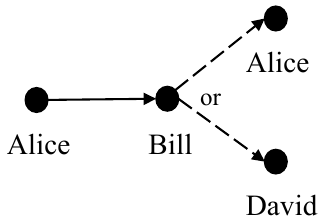}
        \label{fig:true case}
    }
    % \quad\quad
    \subfigure[]{
        \includegraphics[width=4cm]{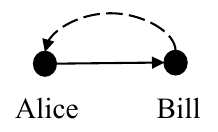}
        \label{fig:false case}
    }
    \caption{Why some relations are non-invertible. (a) A toy example. (2) True case. (3) False case.}
    % \label{fig:my_label}
\end{figure}

\section{Implement Details}
\label{app:implement details}
\subsection{Rotation Matrices of UniBi}
\label{app:rotation matrix}
In Section \ref{sec:experiment}, we propose two variants of UniBi, i.e., UniBi-O(2) and UniBi-O(3). Here, we give the rotation matrix $\mathbf{SO}(k)$ they used specifically. 
\begin{equation}
    \mathbf{SO}(2) = \begin{bmatrix}
    x & -y \\
    y & x 
    \end{bmatrix},\  \mathbf{SO}(3) = \begin{bmatrix} p & -q & -u & -v\\
    q & p & v & -u \\
    u & -v & p & q\\
    v & u & -q & p\end{bmatrix},
\end{equation}
where $x,y, p,q,u,v\in\mathbb{R}$ and $x^2 + y^2 = 1, p^2 + q^2 + u^2 + v^2 = 1$.

\subsection{Hyperparameters}
\label{app: hyperparameters}
We fix the dimension of all models except RESCAL on WN18RR to $500$, while RESCAL on WN18RR is set to $256$ following~\citet{dura}. We choose Adam~\citep{adam} as the optimizer and fix the learning rate at $1e-3$. We set the maximum epochs to $200$ and apply the early stopping strategy.

We set the scaling factor $\gamma$ to $1$ for all models except UniBi. And we search $\gamma$ from $\left\{1, 5, 10, 15, 20, 25, 30\right\}$ for UniBi. For the factor for regularization $\lambda$ we search $\{1,5e-1,1e-1,5e-2,1e-2,5e-3,1e-3\}$ for all models except UniBi and $\{0.5, 1, 1.5, 2, 2.5,3\}$ for it. Furthermore, we do not search for $\lambda_1$ and $\lambda_2$ in Eq. \ref{equ:dura like} as in the original paper of DURA~\citep{dura}. we adopt their settings and set $\lambda_1 = 0.5, \lambda_2 = 1.5$ for ComplEx~\citep{complex} and CP~\citep{cp}, while $\lambda_1=\lambda_2=1$ for other cases. The search results are listed on the Tbl. \ref{tab:hyper para}. We search for the batch size from $\{100, 1000\}$. 

In addition, we implemented all the experiments in PyTorch with a single NVIDIA GeForce RTX 1080Ti graphics card. We repeat each experiment five times and take their means and standard deviations.

\begin{table}[!t]
    \centering
    \caption{Hyperparameters found by grid search. $\lambda$ is the regularization coefficient, $\gamma$ is the scaling factor, $b$ is the batch size.}
    \begin{tabular}{lccccccccc}
    \toprule
    \multicolumn{1}{c}{} & \multicolumn{3}{c}{\textbf{WN18RR}}& \multicolumn{3}{c}{\textbf{FB15K237}}& \multicolumn{3}{c}{\textbf{YAGO3-10-DR}} \\
    Model & $\lambda$ & $\gamma$ & b & $\lambda$ & $\gamma$ & b &  $\lambda$ & $\gamma$ & b  \\ 
    \midrule
    CP & 1e-1 & 1 & 100 & 5e-2 & 1 & 100  & 5e-3 & 1 & 1000\\
    ComplEx & 1e-1 & 1 & 100 & 5e-2 & 1 & 100 & 1e-2 & 1 & 1000\\
    RESCAL & 1e-1 & 1 & 1000 & 5e-2 & 1 & 1000 & 5e-2 & 1 &1000\\
    \midrule 
    UniBi-O(2) & 2 & 20 & 100 & 2 & 25 & 1000 & 1.5 & 30 & 1000 \\
    \quad - w/o constraint & 1e-1 & 1 & 100 & 5e-2 & 1 & 1000 & 5e-2 & 1 & 1000\\
    UniBi-O(3) & 2 & 15 & 100 & 1.5 & 20 & 1000 & 1.5 & 30 & 1000\\
    
    \quad - w/o constraint & 1e-1 & 1 & 100 & 5e-2 & 1 & 1000 & 5e-2 & 1 & 1000\\
\bottomrule
    \end{tabular}
    \label{tab:hyper para}
\end{table}

For the experiments in Section \ref{sec:robust}, we set $\gamma$ for UniBi without DURA in Section \ref{sec:robust} to $10,15,25$ for WN18RR, FB15k-237, and YAGO3-10-DR for UniBi, respectively.

% \subsection{}

\section{Time and Space Complexity}
\label{app:time and space}
% 这里需要给一个表，对比一下跑的时间
% 计算DURA比较耗时，这里我们暂时不考虑了
Since UniBi needs to calculate some constraints, it spends more space and time. Here, we compare the time and space consumption of UniBi and CP, ComplEx, and RESCAL on FB15k-237, and set the batch size to 1000.

As demonstrated in Fig. \ref{fig:compare time} and Fig. \ref{fig:compare space}, UniBi takes a little more time than CP and ComplEx, and more time to compute the regularization since it is more complex. Nevertheless, we noticed that UniBi occupies a space similar to that of CP and ComplEx. 

In addition, we note that in Section\ref{sec:main results}, UniBi outperforms all bilinear based models except RESCAL on WN18RR. And RESCAL needs significantly more time and space than other models. Therefore, it is not as efficient as other models.

In summary, although UniBi takes a little longer, it is a good balance between complexity and performance.

In future work, we will consider finding other constraints on the relation instead of the spectral radius. 

\begin{figure}[t]
    \centering
    \subfigure[Comparison of time.]{
    \includegraphics[width=6.7cm]{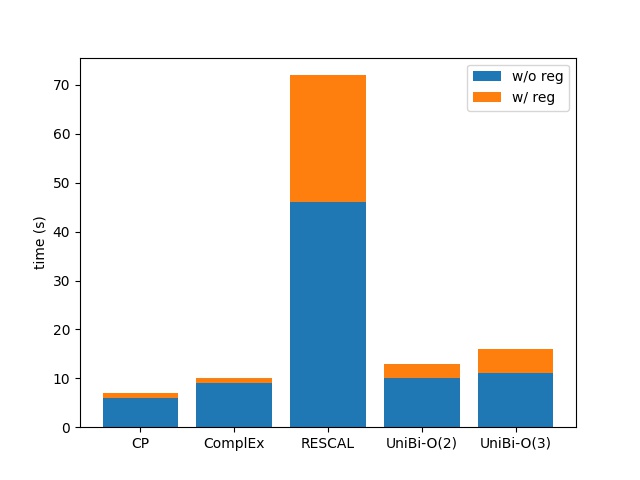}
    \label{fig:compare time}}
    \subfigure[Comparison of space.]{
    \includegraphics[width=6.7cm]{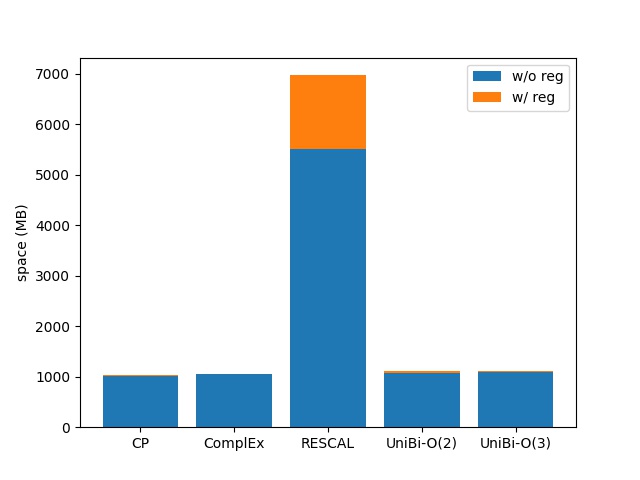}
    \label{fig:compare space}}
    \caption{UniBi takes a little more time than other models with linear parameters (CP and ComplEx) and significantly efficient than ones with quadratic parameters (RESCAL). }
    \label{fig:time space}
\end{figure}

\section{Inherent regularization term}
\label{sec:inherent regularization}

Here we find a necessary condition of UniBi and then deduce its corresponding Lagrangian function. 
\begin{theorem}
\label{prop:identity constrain}
UniBi has a necessary condition that $\|\mathbf{e}\| = 1$, $\|\mathbf{Re}\| \leq 1$ and $\|\mathbf{R^\top e}\| \leq 1$.
\end{theorem}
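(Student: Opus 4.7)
The plan is to derive all three inequalities directly from UniBi's defining constraints $\|\mathbf{e}\| = 1$ and $\rho(\mathbf{R}) = 1$ via an elementary operator-norm argument, essentially reusing the Cauchy--Schwarz calculation behind Proposition \ref{prop:bound}. The first inequality $\|\mathbf{e}\| = 1$ is literally the entity normalization defining $\hat{\mathbb{E}}$, so no argument is required there.

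For $\|\mathbf{R}\mathbf{e}\| \leq 1$, I would invoke the sub-multiplicative property of the $L_2$-induced matrix norm,
\begin{equation}
\|\mathbf{R}\mathbf{e}\| \;\leq\; \|\mathbf{R}\|_{op}\,\|\mathbf{e}\|.
\end{equation}
Under the convention adopted in Section \ref{sec:efficient modeling} (where $\rho(\mathbf{R})$ is treated as the largest singular value $\sigma_{max}(\mathbf{R})$), the operator norm coincides with $\rho(\mathbf{R})$, and the defining constraints $\rho(\mathbf{R}) = 1$ and $\|\mathbf{e}\| = 1$ yield the bound in one line. The third inequality $\|\mathbf{R}^\top \mathbf{e}\| \leq 1$ then follows by applying the same bound to $\mathbf{R}^\top$, using the elementary fact that $\mathbf{R}$ and $\mathbf{R}^\top$ share the same singular values (their Gram matrices $\mathbf{R}\mathbf{R}^\top$ and $\mathbf{R}^\top \mathbf{R}$ have the same nonzero spectrum), so $\rho(\mathbf{R}^\top) = \rho(\mathbf{R}) = 1$.

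There is no substantial obstacle here; the only point worth flagging is the notational convention around $\rho(\cdot)$. If $\rho$ were read literally as the spectral radius in the sense of the largest absolute eigenvalue, the operator-norm inequality would fail for non-normal $\mathbf{R}$ (for instance, a strictly upper-triangular matrix has spectral radius $0$ yet nonzero operator norm). Reading $\rho(\mathbf{R}) = \sigma_{max}(\mathbf{R})$ instead, as is done implicitly when the efficient-modeling section normalizes via division by $\sigma_{max}$, the whole argument collapses to a single invocation of Cauchy--Schwarz plus the transpose-invariance of singular values.
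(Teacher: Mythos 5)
Your proof is correct and takes essentially the same route as the paper: the entity part is immediate from the definition, and both matrix--vector bounds follow from $\|\mathbf{Re}\| \leq \rho(\mathbf{R})\|\mathbf{e}\| = 1$ (with $\rho$ read as the largest singular value, exactly as the paper itself does) plus transpose-invariance of singular values. The paper's appendix additionally sketches a partial converse showing when the condition becomes sufficient, but that goes beyond the stated necessity claim, which your argument fully covers.
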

\begin{proof}
Please refer to the Appendix \ref{proof:necessary constraints}.
\end{proof}

If we ignore the other implicit constraints, the optimization of UniBi can be rewritten in the form of a constrained optimization problem.
\begin{equation}
\label{equ:constain}
    \begin{aligned}
    \min_{\mathbf{h,r,t}} \ & \sum_{(h_i,r_j,t_k)\in\mathcal{K}}f(h_i,r_j,t_k) \\
    s.t. \ & \|\mathbf{h}_i\|^2 = 1, \|\mathbf{t}_k\|^2 = 1\\
    &\|\mathbf{R}_j^\top \mathbf{h}_i\|^2 \leq 1, \|\mathbf{R}_j\mathbf{t}_k\|^2 \leq 1,
    \end{aligned}
\end{equation}
where $f(h,r,t)$ is a loss function. Furthermore, Eq. \ref{equ:constain} corresponds to a Lagrangian function.
\begin{equation}
\label{equ:lagrangian}
    \begin{aligned}
    \min_{\mathbf{h,r,t},\lambda, \mu}  \  &\sum_{(h_i,r_j,t_k)\in\mathcal{K}}f(h_i,r_j,t_k) +  \lambda_i^h(\|\mathbf{h}_i\|^2-1) +\lambda^t_k(\|\mathbf{t}_k\|^2 - 1) \\
    &+ \mu^h_j(\|\mathbf{R}_j^\top \mathbf{h}_i\|^2 - 1) + \mu^t_j(\|\mathbf{R}_j\mathbf{t}_k\|^2 - 1) \\
    s.t.\lambda_i^h,\lambda_k^t,\ &\mu_j^h,\mu_j^t \geq 0.
    \end{aligned}
\end{equation}

We notice that if we set factors $\lambda_i^h = \lambda_k^t = \lambda\lambda_1$ and $\mu_j^h = \mu_j^t = \lambda\lambda_2$ where $\lambda,\lambda_1,\lambda_2>0$, we achieve the following expression from Eq. \ref{equ:lagrangian} by discarding constant terms.

\begin{equation}
\label{equ:dura like}
    \begin{aligned}
    \min_{\mathbf{h,r,t}}  \  \sum_{(h_i,r_j,t_k)\in\mathcal{K}}f(h_i,r_j,t_k) +  \lambda\left[\lambda_1(\|\mathbf{h}_i\|^2 + \|\mathbf{t}_k\|^2) 
    + \lambda_2(\|\mathbf{R}_j^\top \mathbf{h}_i\|^2 + \|\mathbf{R}_j
    \mathbf{t}_k\|^2)\right],
    \end{aligned}
\end{equation}
which is equivalent to the optimization of a unconstrained model under DURA~\citep{dura}, the best regularization term for bilinear based model before. Therefore, we can get a more general version of DURA (DURA-G for simplicity) from Equ. \ref{equ:lagrangian}.
\begin{equation}
    \textbf{DURA-G}: \lambda_1\|\mathbf{h}_i\|^2 +\lambda_2\|\mathbf{t}_k\|^2+ \lambda_3\|\mathbf{R}_j^\top \mathbf{h}_i\|^2 + \lambda_4\|\mathbf{R}_j\mathbf{t}_k\|^2 
\end{equation}

At first glance, this seems to be nothing more than a worthless trick. However, in terms of DURA, DURA-G is nonsense and cannot deduce form its perspective of distance-based models (Please refer to Section 4.3 in the original paper for more details) while making sense in the perspective of Lagrangian function.

Although DURA-G has an additional hyperparameter to search and can lead to better results, we do not use this in experiments for three reasons: 1) for a more fair comparison with previous models, 2) the improvement is marginal, and 3) we prefer not to distract the reader from our key ideas.

{Extra experiments of Scales}
To demonstrate that the phenomenon in Fig. \ref{fig:case 2} happens, we carry out additional experiments on WN18RR by explicitly adding the identity relation explicitly and monitor its scales in different models. We run all experiments five times and take their means and variants. As shown in Fig. \ref{fig:identity scales}, we notice that all models except UniBi has fluctuation on the spectral radius of \textit{identity}, which verifies the phenomenon in Fig. \ref{fig:case 2}.

\begin{figure}[t]
\includegraphics[width=\textwidth]{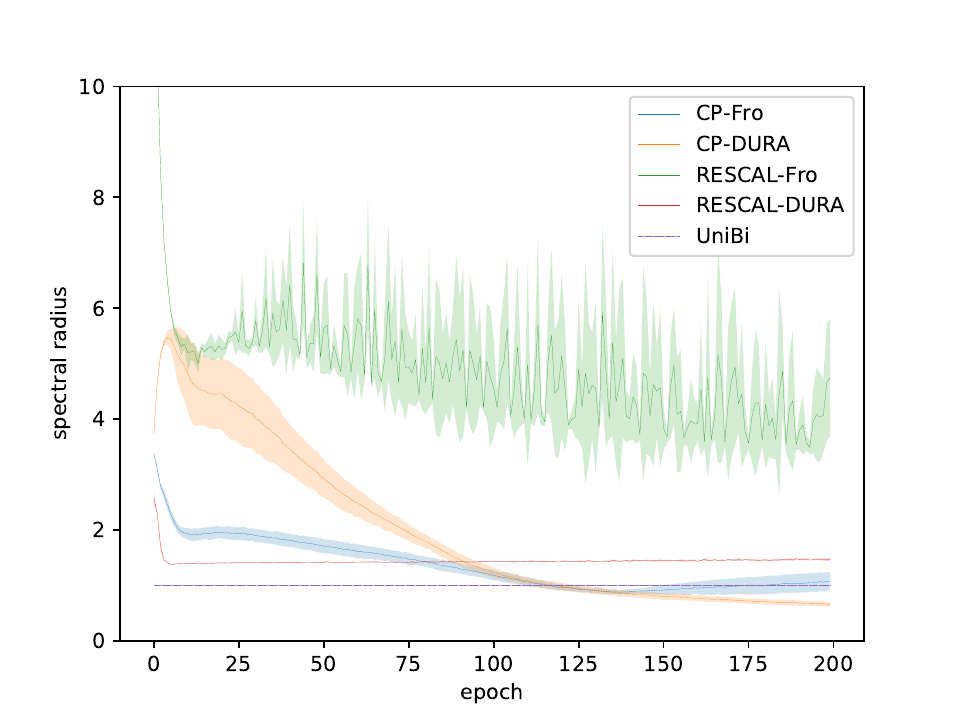}
\caption{Bilinear based models learning \textit{identity} have different scales. (Better zoom in to see the fluctuation of RESCAL-DURA.)}
\label{fig:identity scales}
\end{figure}

\begin{figure}
\begin{minipage}[b]{.5\linewidth}
\centering
\includegraphics[width=5cm]{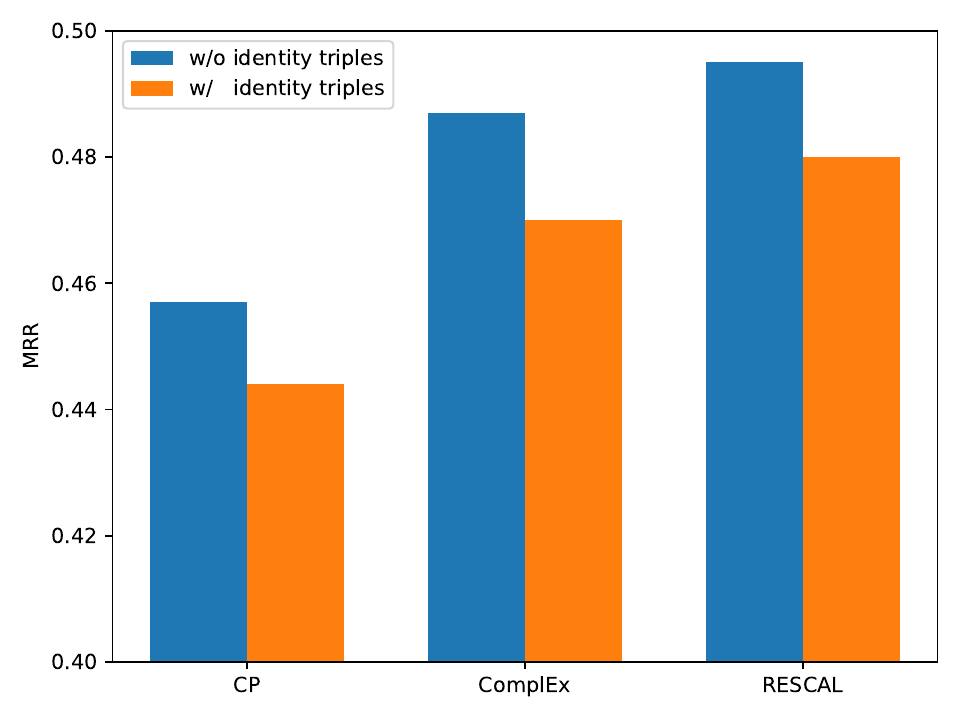}
\caption{Directly adding identity triples may hurt performance.}
\label{fig:add_identity}
\end{minipage}%
\begin{minipage}[b]{.5\linewidth}
\centering
\begin{tabular}{cccc}
% \hline
\toprule
Dataset & \#Identity & \#Others(Avg.) \\
\midrule
WN18RR & 40,943 & 7,894 \\
FB15k-237 & 14,514 & 1,148\\
YAGO3-10-DR & 122,873 & 20,348\\
\bottomrule
\end{tabular}
\captionof{table}{Statistics of triples of identity and other relations in benchmark datasets.}
\label{tbl: statics of triples of identity and other relations}
\end{minipage}
\end{figure}

\section{Comparison of learning and modeling identity}

Readers may wonder why not add the identity relation to the training set and take this indirect approach. The reason is that learning on identity per se does not help the performance on other relations, and may be harmful for the overall results as shown in Fig .\ref{fig:add_identity}. We think the reason is that the triples of identity are too much compared to the ones of other relations as shown in Tbl. \ref{tbl: statics of triples of identity and other relations}, and the model negligence in learning these relations.

\section{Illustration of How the Performance is Improved}

This is a more detailed illustration of how the ineffective learning is prevented and enhance the performance as discussed in Section \ref{sec:properties}.

\begin{figure}[h]
    \centering
    \includegraphics[width=\textwidth]{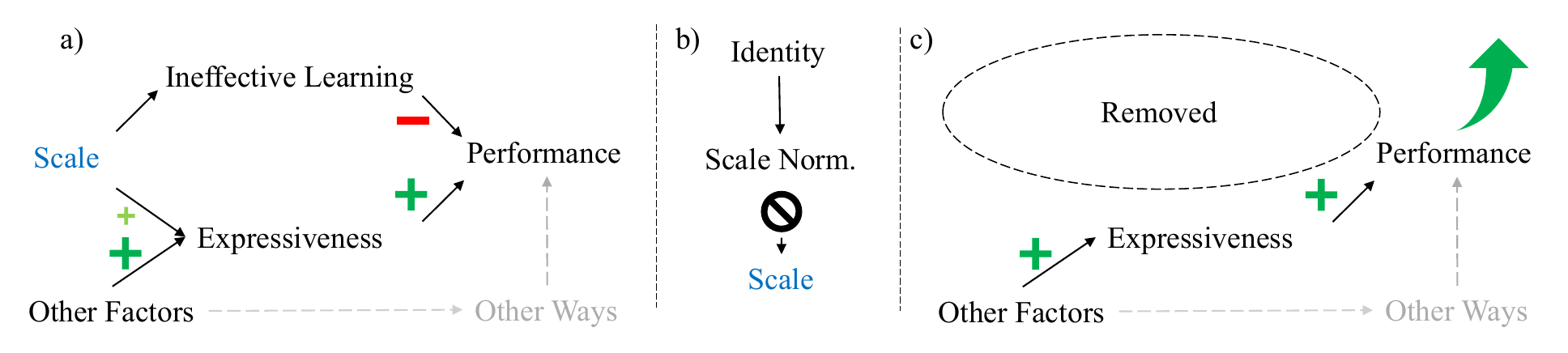}
    \caption{How modeling identity improves the performance. a) An illustration of how scale affects the performance. On the plus side, it has little effect on expressiveness, on the minus side, it causes the ineffective learning. b) Modeling identity requires scale normalization, which removes the effect of scales. c) UniBi improves the performance since it only deals with scales and not the other factors.}
    \label{fig:identity scale performane}
\end{figure}

\end{document}